%% file: neurips_2026.tex
\documentclass{article}

\PassOptionsToPackage{numbers,sort&compress}{natbib}

\usepackage[main, final]{neurips_2026}

\usepackage[utf8]{inputenc} % allow utf-8 input
\usepackage[T1]{fontenc}    % use 8-bit T1 fonts
\usepackage{hyperref}       % hyperlinks
\usepackage{url}            % simple URL typesetting
\usepackage{booktabs}       % professional-quality tables
\usepackage{multirow}       % multi-row cells in tables
\usepackage{colortbl}       % cell background colors in tables
\usepackage{amsfonts}       % blackboard math symbols
\usepackage{nicefrac}       % compact symbols for 1/2, etc.
\usepackage{microtype}      % microtypography
\usepackage{xcolor}         % colors
\definecolor{headerblue}{rgb}{0.816,0.910,1.000}
\definecolor{headerpink}{rgb}{0.984,0.902,0.902}
\usepackage{amsmath}
\usepackage{amssymb}
\usepackage{amsthm}
\usepackage{bm}
\usepackage{bbm}
\usepackage{graphicx}
\usepackage{mathtools}
\usepackage{enumitem}
\usepackage[most]{tcolorbox}
\usepackage{wrapfig}
\usepackage{subcaption}
\usepackage{enumitem}
\usepackage{algorithm}
\usepackage{algpseudocode}

\tcbuselibrary{skins,breakable}
\newtcolorbox{prompt}[1][Prompt]{
  colback=gray!5,
  colframe=gray!50!black,
  fonttitle=\bfseries,
  title=#1,
  sharp corners,
  boxrule=0.5pt,
  breakable,          %  <-- this is the key
  enforce breakable,  %  <-- belt-and-suspenders, forces it even in tricky contexts
}

\definecolor{todocolor}{HTML}{CC785C}
\newtheorem{theorem}{Theorem}

\newtheorem{proposition}{Proposition}
\newtheorem{corollary}{Corollary}

\usepackage[capitalise]{cleveref}
\crefname{appendix}{Appendix}{Appendices}
\Crefname{appendix}{Appendix}{Appendices}

\newcommand{\Dset}{\mathcal{D}}

\newcommand{\Vocab}{\mathcal{V}}
\newcommand{\E}{\mathbb{E}}
\newcommand{\N}{\mathbb{N}}

\newcommand{\R}{\mathbb{R}}

\newcommand{\argmax}{{\arg\max}}

\newcommand{\conf}{\mathrm{conf}}
\newcommand{\Dir}{\text{Dir}}

\newcommand{\alv}{\bm{\alpha}}

\newcommand{\ev}{\bm{e}}

\newcommand{\pv}{\bm{p}}
\newcommand{\qv}{\bm{q}}
\newcommand{\thetav}{\bm{\theta}}

\newcommand{\alk}{\alpha_k}
\newcommand{\alz}{\alpha_0}

\newcommand{\elli}{\ell^{(i)}}
\newcommand{\phk}{\hat{p}_k}
\newcommand{\pthetav}{p_{\thetav}}
\newcommand{\ui}{u^{(i)}}
\newcommand{\zi}{z^{(i)}}

\newcommand{\LMSE}{\mathcal{L}_{\text{MSE}}}
\newcommand{\LSFT}{\mathcal{L}_{\text{SFT}}}
\newcommand{\LSEDL}{\mathcal{L}_{\text{S-EDL}}}
\newcommand{\lmseweight}{\lambda_{\text{MSE}}}

\title{S-EDL: Eliciting Self-Evidence from Sequence Likelihoods for Semantic Calibration of LLMs}

\author{%
  Yawei Li\textsuperscript{1,3} \ 
  Jiazheng Li\textsuperscript{2} \ 
  David R\"ugamer\textsuperscript{1,3} \ 
  Bernd Bischl\textsuperscript{1,3} \ 
  Yulan He\textsuperscript{2} \ 
  Mina Rezaei\textsuperscript{1,3} \\[0.5em]
  \normalfont
  \textsuperscript{1}LMU Munich \quad
  \textsuperscript{2}King's College London \\
  \textsuperscript{3}Munich Center for Machine Learning (MCML)
}

\begin{document}

\maketitle

\input{sections/0_abstract.tex}

\input{sections/1_introduction.tex}

\input{sections/5_related_work.tex}

\input{sections/2_background.tex}

\input{sections/3_methodology.tex}

\input{sections/4_experiments.tex}

\input{sections/6_conclusion.tex}

\section*{Acknowledgment}
We gratefully acknowledge the funding that supported this work. YL and MR were supported by the Amazon Research Award. JL was supported by a PhD scholarship from AQA. YH was supported in part by the UK Engineering and Physical Sciences Research Council through a Turing AI Fellowship (grant no. EP/V020579/1, EP/V020579/2) and the Prosperity Partnership scheme (grant no. UKRI566).

\bibliographystyle{plainnat}
\bibliography{references}

\newpage
\input{sections/7_appendix.tex}

\newpage
\input{checklist.tex}

\end{document}

%% file: sections/0_abstract.tex
\begin{abstract}
Calibrating large language models (LLMs) in open-ended generation is uniquely challenging because many distinct token sequences can express the same underlying meaning. Prior calibration techniques target either fixed-set classification confidence or the binary correctness of individual strings, making them inherently ill-suited for the dynamic, multi-string nature of open-ended semantic outcomes. To bridge this gap, we introduce S-EDL, a novel evidential method that natively optimizes LLMs for semantic calibration. By eliciting evidence from the sequence likelihoods of the model itself, S-EDL constructs a differentiable Dirichlet prior over prompt-specific semantic classes. Optimizing the evidential loss under this prior yields a calibration-aware training signal over variable semantic classes, completely bypassing the need for fixed multiple-choice options. Evaluations across four open-ended benchmarks and three models demonstrate that S-EDL substantially reduces semantic calibration error while improving accuracy. By aligning generative likelihoods with semantic correctness, this work establishes a framework for deploying trustworthy LLMs in complex, open-ended applications. Code is available at: \url{https://github.com/sandylaker/s-edl}.
\end{abstract}

%% file: sections/1_introduction.tex
\section{Introduction}

Large language models (LLMs) are increasingly used for open-ended reasoning tasks, where reliable confidence estimates are important for deciding when model outputs can be trusted~\citep{liu2025uncertainty,mei2026reasoning,zollo2026unsupervised,li2026steering}. Early research has primarily focused on multiple-choice question answering (MCQA), where calibration can be defined over a small set of answer labels~\citep{wang2024blob,yang2024bayesian,rahmati2025clora,li2025calibrating}. Open-ended generation is more subtle: many distinct token sequences can express the same answer, and the probability assigned to a surface form does not necessarily reflect confidence in its semantic content. 

Prior work has studied open-ended uncertainty and calibration through several related targets. Semantic uncertainty metrics~\citep{kuhn2023semantic,farquhar2024detecting} sample multiple generations and group equivalent responses to evaluate the predictive distribution over semantic classes. Building on this perspective, semantic calibration~\citep{nakkiran2026trained} translates this grouping concept into an evaluation metric for open-ended generation. In parallel, verbal confidence calibration methods~\citep{kapoor2024large,stengel2024lacie,xu2024sayself, lu2026double, bani2026rewarding, damani2025beyond,zhang2026illusion} teach models to report a confidence score of correctness. However, this calibrates the binary correctness of a single response rather than the underlying distribution over alternative semantic answers. Thus, a gap remains: how to fine-tune open-ended LLMs under a semantic calibration objective. 

Evidential deep learning (EDL) natively bridges this gap. By placing a Dirichlet prior over categorical probabilities, it explicitly models the strength of evidence for multiple competing meanings, allowing calibration losses to act on distributions over semantic meanings. Building on this, we propose S-EDL, a self-evidenced EDL method tailored for semantic calibration. Unlike conventional EDL methods that require a fixed set of answer labels~\citep{sensoy2018evidential,chen2024r}, S-EDL accommodates the dynamic nature of open-ended generation by transforming the sequence likelihoods of the model into a differentiable evidence vector. This formulation allocates greater evidential support to semantic classes backed by highly probable strings. The evidence vector then parametrizes a Dirichlet prior over the semantic outcomes, providing a calibration-aware training signal. Our contributions are three-fold:

\begin{itemize}[leftmargin=1.5em, topsep=0pt, parsep=0pt, itemsep=0.5em]
\item We extend EDL from fixed-choice calibration to open-ended semantic calibration, allowing LLMs to optimize their confidence over diverse semantic meanings.

\item We introduce a novel parametrization that derives a differentiable evidence vector from the model's sequence likelihoods. This formulation also theoretically unifies post-hoc metrics such as Semantic Entropy~\citep{kuhn2023semantic,farquhar2024detecting} under a common evidential framework.

\item Across three models and four open-ended benchmarks, S-EDL jointly improves average accuracy and calibration over supervised fine-tuning (SFT), reducing average calibration error by up to roughly 1/3.
\end{itemize}

% \begin{figure}[t]
%     \centering
%     \includegraphics[width=\textwidth]{figures/s_edl_overview.pdf}
%     \caption{While standard EDL relies on a fixed set of labels (top), S-EDL adapts to open-ended generation by targeting \emph{semantic calibration} (bottom). To construct the evidence without predefined options, S-EDL samples responses from the LLM, clusters them by semantics, and aggregates their likelihoods. This maps dynamic generations into a Dirichlet prior over semantic distributions.}
%     \label{fig:placeholder}
% \end{figure}

%% file: sections/5_related_work.tex
\section{Related work}

\textbf{From MCQA to semantic calibration.}\quad Early LLM calibration focused on MCQA, where a fixed label set enables standard classification-style methods, including Bayesian low-rank adaptation~\citep{wang2024blob, yang2024bayesian, rahmati2025clora, shi2025trainingfree} and evidential approaches~\citep{li2025calibrating}. Open-ended generation introduces a level mismatch: many distinct sequences can express the same answer. Semantic uncertainty metrics~\citep{kuhn2023semantic, farquhar2024detecting, duan2024shifting, nikitin2024kernel,vipulanandan2026semantic, chen2024inside} sample generations and group by meaning, and semantic calibration~\citep{nakkiran2026trained} turns this view into a calibration target for open-ended generation. Our work follows this semantic line.

\textbf{Verbal and token-level calibration.}\quad A separate line of work asks models to emit a scalar correctness score~\citep{kapoor2024large,stengel2024lacie,xu2024sayself, lu2026double, bani2026rewarding, damani2025beyond,zhang2026illusion}, which calibrates a Bernoulli pushforward (correct vs.\ incorrect) of the output distribution rather than a categorical distribution over semantic outcomes. Meanwhile, SelfReflect~\citep{kirchhof2026selfreflect} captures the categorical distribution by a verbalized string. Token-level methods such as label smoothing~\citep{huang2025calibrated,szegedy2016rethinking}, temperature scaling~\citep{shen2024thermometer, lamb2026improving}, RCFT~\citep{xiao2025restoring}, and CATTO~\citep{parikh2026catto}, regularize per-token distributions, supervising a different granularity than the semantic marginal targeted here.

\textbf{Evidential deep learning.}\quad EDL~\citep{malinin2018predictive,sensoy2018evidential,deng2023uncertainty,chen2024r,joo2020being,charpentier2020posterior,charpentier2022natural,yoon2025uncertainty} places a Dirichlet prior over class probabilities through an evidence vector. Although previous work has applied EDL to calibrate LLMs in MCQA~\citep{li2025calibrating} and detect misbehavior in vision-language models~\citep{huang2026detecting}, S-EDL is, to the best of our knowledge, the first instantiation of EDL for calibrating LLMs in open-ended generation.

%% file: sections/2_background.tex
\section{Background}

\subsection{Semantic calibration in open-ended LLM generation}
\label{sec:semantic_calibration_background}

Our work focuses on calibration in open-ended generation, for which we adopt the framework of B-calibration~\citep{nakkiran2026trained}. Let $\Dset$ be a distribution over prompt-completion pairs $(x, y) \in \Vocab^* \times \Vocab^{L}$, where $x$ is a prompt over vocabulary $\Vocab^*$ and $y \in \Vocab^{L}$ is a ground-truth completion of length $L$. An autoregressive LLM $\pthetav$ with parameters $\bm{\theta}\in \Theta$ assigns probability $\pthetav(z|x)$ to each candidate response $z \in \Vocab^{L}$. Because distinct token sequences can convey the same meaning (e.g., ``half'' vs. ``0.5''), B-calibration~\citep{nakkiran2026trained} introduces a \emph{collapsing function} $B: \Vocab^* \times \Vocab^{L} \rightarrow \N$ mapping each prompt-response pair to a global \emph{semantic class}, a raw identifier $s \in \N$. We write $s := B(x, z)$ for a generation's class and $s^* := B(x, y)$ for the ground-truth class of $(x, y)$. The collapsing function $B$ can be implemented by a verification tool or an LLM-as-judge that clusters responses by meaning.
% \textbf{Induced semantic distribution and calibration.}\quad
% The LLM's sequence distribution pushes forward to a categorical distribution over semantic classes,
A \emph{semantic distribution} is then defined as the categorical distribution over semantic classes:
\begin{equation}
\label{eq:sem_dist}
    \pi_x(s) := \textstyle\sum_{z\,:\,B(x,z) = s} \pthetav(z|x), \quad s \in \N, z \in \Vocab^{L}.
\end{equation}
Further, denote the top class $s_x^{\text{top}} := \argmax_s \pi_x(s)$ and the \emph{semantic confidence} $\conf(x) := \pi_x(s_x^{\text{top}})$. \emph{Semantic calibration} requires that the confidence aligns with the true probability of correctness:
\begin{equation*}
    \E_{(x, y) \sim \Dset} \!\left[\, \mathbbm{1}\lbrace s_x^{\text{top}} = s^*\rbrace \,\big|\, \conf(x) \,\right] = \conf(x) \quad \text{almost surely.}
\end{equation*}
That is, among all prompts for which the model reports semantic confidence $c$, the top semantic class should indeed be correct with frequency $c$~\citep{nakkiran2026trained}.

\subsection{Evidential deep learning}
\label{sec:edl_background}

The calibration objective above asks the model to report not just which semantic class is most likely but also how confident it is in that belief. EDL~\citep{sensoy2018evidential,malinin2018predictive} provides a framework for the latter: rather than predicting a single categorical $\pv \in \Delta^K$ over $K$ classes, it predicts a distribution over $\pv$, accounting for uncertainty over class probabilities instead of training only against a point estimate.

\textbf{Dirichlet prior and evidence.}\quad
In this subsection, $k \in [K]$ denotes a generic class index for a standard fixed-$K$ classification problem, separate from the semantic-label notation of \cref{sec:semantic_calibration_background}. EDL places a Dirichlet prior $\Dir(\pv; \alv)$ on $\pv \in \Delta^K$, with Dirichlet parameters $\alv = (\alpha_k)_{k\in[K]} \in [1, +\infty)^K$ built from a non-negative \emph{evidence vector} $\ev = (e_k)_{k\in[K]} \in \R_{\geq 0}^K$ via $\alk = e_k + 1$.

Each $e_k$ represents the accumulated support for class $k$. Conventional EDL employs a classification model $f_{\thetav}$ to predict the evidence directly, $\ev = \text{SoftPlus}(f_{\thetav}(x))$, and the Dirichlet mean serves as the predictive distribution: $\phk = \E_{\pv \sim \Dir(\cdot; \alv)} [p_k] = \frac{\alk}{\alz}, \text{with \emph{Dirichlet strength} } \alz = \textstyle\sum_{k=1}^K \alk.$

\textbf{EDL training objective.}\quad
Given a one-hot target $\qv = (q_k)_{k\in[K]} \in \{0, 1\}^K$ with $q_k = \mathbbm{1}\lbrace k = t\rbrace$, \citet{sensoy2018evidential} learn the model's parameter $\bm{\theta}\in\Theta$ by minimizing the Bayes risk of MSE objective
\begin{equation} \label{eq:mse}
    \LMSE(\thetav) = \E_{\pv \sim \Dir(\cdot; \alv)} \|\qv - \pv\|_2^2 = \sum_{k=1}^K \left( q_k - \frac{\alk}{\alz} \right)^2 + \sum_{k=1}^K \frac{\alk (\alz - \alk)}{\alz^2 (\alz + 1)},
\end{equation}
which admits a closed form. Previous work IB-EDL~\citep{li2025calibrating} has shown that EDL methods are very effective in calibrating LLMs in the MCQA setting, but their approach is limited to a fixed set of options such as A/B/C/D. In the following section, we move a step forward and instantiate this framework in the open-ended generation setting, deriving $\ev$ from the LLM's own generations.

%% file: sections/3_methodology.tex
\section{Methodology}
\label{sec:method}

\subsection{Motivation}
\label{sec:motivation}

Given EDL's effectiveness for calibrating LLMs in MCQA, we now discuss the adaptations required to apply it in open-ended generation.
Instantiating the EDL framework in this setting hinges on one design choice: how to parametrize the evidence vector $\ev$. Conventional EDL learns $\ev$ through a dedicated prediction head whose output dimension equals the number of classes. This recipe does not transfer to our setting for two reasons. First, the relevant semantic class inventory is prompt-dependent and not known in advance, so there is no canonical fixed output space. Second, and most fundamentally, the distribution whose calibration we ultimately care about is $\pthetav(\cdot | x)$; routing evidence through an independent head decouples it from the very object we aim to calibrate. We therefore look for a parametrization that derives $\ev$ from $\pthetav(\cdot | x)$.

A natural choice for constructing $\ev$ is utilizing $\pi_x$ itself, which already encodes how $\pthetav(\cdot|x)$ distributes mass over semantic classes. Since $\mathcal{V}^L$ is typically vast, \citet{nakkiran2026trained} estimate $\pi_x$ from $N$ i.i.d. Monte-Carlo samples $z^{(i)} \sim \pthetav(\cdot|x)$ with semantic classes $s^{(i)} = B(x, z^{(i)})$, which induce the prompt-specific semantic class set $\mathcal{S}_x := \{s^{(i)} : i \in [N]\} \subset \mathbb{N}$. Further details regarding the empirical construction of $\mathcal{S}_x$ in edge cases are in \cref{sec:unobserved_gt}. The plug-in estimator of $\pi_x$ is then
\begin{equation}\label{eq:histogram}
    \hat\pi_x(s) = \tfrac{1}{N} \textstyle\sum_{i=1}^{N} \mathbbm{1}[s^{(i)} = s], \quad s \in \mathcal{S}_x.
\end{equation}
However, this histogram is not a suitable evidence vector for EDL training. Once the samples are drawn, the class indicators are constants with respect to $\thetav$, so an evidence vector proportional to $\hat{\pi}_x$ provides no differentiable path from the EDL loss back to the model parameters. In addition, the histogram treats all samples equally, regardless of their likelihood under $\pthetav(\cdot \mid x)$. This is misaligned with the evidential interpretation of $\ev$: a semantic class supported by high-likelihood generations should contribute more evidence than one represented only by low-likelihood samples.

\textbf{Desiderata.}\quad These observations lead to two desiderata for parametrizing evidence $\ev$:
\begin{enumerate}[label=(\alph*),leftmargin=1.8em, nosep, before=\vspace{-\parskip}, after=\vspace{-\parskip}]
    \item The parametrization should assign more evidence to semantic classes supported by high-likelihood completions, so that likely completions contribute more to evidence.
    \item The parametrization should remain differentiable through the sample log-likelihoods, so that the EDL loss can update the likelihoods underlying the sampled semantic estimate.
\end{enumerate}

Next, we instantiate these desiderata through a Monte-Carlo estimate of a tempered semantic distribution, and use this estimate to define a Dirichlet prior over the sampled semantic classes.

\subsection{Formulation of self-evidence}
\label{sec:formulation}

Throughout this subsection, we fix the input prompt $x$ and drop the subscript from prompt-specific quantities (e.g., $\pi_x$ to $\pi$, $\mathcal{S}_x$ to $\mathcal{S}$) to lighten notation; all such objects remain implicitly conditional on $x$. We work directly with semantic labels $s \in \mathcal{S}$, without introducing a separate local class index. In addition, we use the term \emph{self-evidence} to refer to evidential support constructed from the LLM's own Monte-Carlo samples. At the sample level, each generation $\zi$ receives a scalar \emph{self-evidence weight} $\ui$; at the class level, these weights aggregate into the \emph{self-evidence vector} $\ev$.

\textbf{Tempered semantic distribution.}\quad
For a scalar $\beta \geq 1$, we define the \emph{tempered semantic distribution} over classes $s \in \mathcal{S}$ and $z,z' \in \Vocab^{L}$ as
\begin{equation}\label{eq:tempered}
    \pi_\beta(s) := \frac{\sum_{z\,:\,B(x,z)=s} \pthetav(z | x)^\beta}{\sum_{z' \in \Vocab^{L}} \pthetav(z' | x)^\beta}.
\end{equation}
For $\beta = 1$, \cref{eq:tempered} recovers the standard semantic distribution $\pi$ in \cref{eq:sem_dist}.
For $\beta > 1$, raising per-sequence likelihoods to a higher power shifts mass toward classes that the model supports with high-confidence sequences, formalizing desideratum~(a). 
% The parameter $\beta$ is a single scalar controlling how strongly confident generations dominate the aggregate.

\textbf{Self-evidence estimator.}\quad
Direct evaluation of \cref{eq:tempered} is intractable because both the classwise numerator and the global normalizer require sums over $\Vocab^{L}$. We therefore estimate $\pi_\beta$ from the same $N$ Monte-Carlo samples introduced in \cref{sec:motivation}. Writing $\elli := \log \pthetav(\zi | x)$ for teacher-forcing log-likelihood for sample $i$, we assign to each sample the self-evidence weight
\begin{equation} \label{eq:self_evidence_weight}
    \ui := \exp \!\left( (\beta - 1)\, \elli \right) = \pthetav(\zi | x)^{\beta - 1}.
\end{equation}
The exponent is $\beta - 1$ rather than $\beta$ because the samples $\zi$ are themselves drawn from $\pthetav(\cdot | x)$, so multiplying by $\pthetav(\zi | x)^{\beta - 1}$ recovers the $\pthetav^\beta$ terms of $\pi_\beta$ in \cref{eq:tempered} (made precise in \cref{thm:consistency}). Because $\ui$ is a smooth function of $\elli$, gradients can pass through the sample log-likelihoods to $\thetav$ and satisfy desideratum~(b).
We aggregate the weights within each semantic class and normalize:
\begin{equation}\label{eq:estimator}
    U_s := \!\!\!\!\!\!\!\! \sum_{i\in[N]\,:\,s^{(i)} = s} \!\!\!\!\!\!\!\!\ui, \qquad \hat\pi_\beta(s) := \frac{U_s}{\sum_{r \in \mathcal{S}} U_r}.
\end{equation}
A natural question is whether this estimator $\hat{\pi}_\beta$ recovers the true tempered distribution $\pi_\beta$ as the number of samples grows. The next result confirms this (see the proof in \cref{sec:app:proof_consistency}).
\begin{proposition}\label{thm:consistency}
As $N \to \infty$, $\hat\pi_\beta(s) \xrightarrow{\text{a.s.}} \pi_\beta(s)$ for every $s \in \mathcal{S}$.
\end{proposition}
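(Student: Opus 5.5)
The plan is to treat $\hat\pi_\beta(s)$ as a ratio of two sample means, apply the strong law of large numbers to numerator and denominator separately, and then pass to the ratio by continuity. Fix $x$, $\thetav$, and a class $s$. Dividing both $U_s$ and $\sum_{r\in\mathcal{S}} U_r$ by $N$ gives
\begin{equation*}
\hat\pi_\beta(s) = \frac{\tfrac1N\sum_{i=1}^N \ui\,\mathbbm{1}[s^{(i)}=s]}{\tfrac1N\sum_{i=1}^N \ui},
\end{equation*}
because every sample's class lies in $\mathcal{S}$ by construction of $\mathcal{S}$. Hence the denominator is the plain sample mean of the weights $\ui$.

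First, the summands $\ui\,\mathbbm{1}[s^{(i)}=s]$ and $\ui$ are i.i.d. in $i$, since the $\zi$ are i.i.d. draws from $\pthetav(\cdot|x)$. They are also bounded. Indeed, $\beta\ge 1$ and $\pthetav(\zi|x)\le 1$ give $0\le \ui\le 1$, so integrability is immediate.

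Second, I compute the expectations as importance-sampling identities:
\begin{equation*}
\E\big[\ui\,\mathbbm{1}[B(x,\zi)=s]\big] = \sum_{z:\,B(x,z)=s}\pthetav(z|x)\,\pthetav(z|x)^{\beta-1} = \sum_{z:\,B(x,z)=s}\pthetav(z|x)^{\beta}.
\end{equation*}
Likewise, $\E[\ui]=\sum_{z'\in\Vocab^L}\pthetav(z'|x)^\beta =: Z_\beta$. These are exactly the numerator and the normalizer of \cref{eq:tempered}. This step is where the choice of exponent $\beta-1$ in \cref{eq:self_evidence_weight} is used.

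Third, the strong law of large numbers gives almost sure convergence of the numerator to $\sum_{z:B(x,z)=s}\pthetav(z|x)^\beta$ and of the denominator to $Z_\beta$. The limit $Z_\beta$ is strictly positive, since some $z$ has $\pthetav(z|x)>0$. The map $(a,b)\mapsto a/b$ is continuous on $\{b>0\}$, so the ratio converges almost surely to $\pi_\beta(s)$. Because there are only countably many (in fact finitely many) classes, the almost sure statements hold simultaneously for every $s$.

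The main obstacle is a subtlety rather than a computation: $\mathcal{S}=\mathcal{S}_x$ is itself random and grows with $N$, so the phrase ``for every $s\in\mathcal{S}$'' must be made precise. I would handle it by proving the convergence for every fixed $s\in\N$, with $\hat\pi_\beta(s):=0$ whenever $s$ has not been observed. This is consistent with \cref{eq:estimator}, because $U_s=0$ in that case. Then any class $s$ that eventually appears in $\mathcal{S}$ is covered. Classes with $\pi_\beta(s)=0$ give a limit of $0$, and the almost sure event is a countable intersection over $s$. A second detail is that the denominator must be nonzero for every $N$. This holds because each $\ui>0$, as $\zi$ is sampled with positive probability, so the ratio is always well defined.
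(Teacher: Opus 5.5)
Your proposal is correct and takes the same route as the paper. Both apply the strong law of large numbers separately to $\tfrac{1}{N}U_s$ and $\tfrac{1}{N}\sum_{r}U_r$, use the identity $\E[\pthetav(z|x)^{\beta-1}\mathbbm{1}\{B(x,z)=s\}]=\sum_{z:B(x,z)=s}\pthetav(z|x)^\beta$, and finish with the continuous mapping theorem. Your extra points make explicit what the paper leaves implicit: the bound $0\le \ui\le 1$, the positivity of the limiting normalizer, and setting $\hat\pi_\beta(s):=0$ for unobserved classes to handle the $N$-dependent set $\mathcal{S}$.
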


\textbf{Connection to Semantic Entropy and the practical choice of $\beta$.}\quad
Semantic Entropy (SE)~\citep{kuhn2023semantic,farquhar2024detecting} also aggregates samples by their sequence probabilities $\pthetav(\zi | x)$, which under \cref{eq:self_evidence_weight} corresponds to the special case $\beta = 2$. In our experiments, we adopt a small value $\beta = 1.2$ (ablation in \cref{sec:ablation}). The choice of $\beta$ reflects the two methods' different purposes: SE measures the uncertainty as a post-hoc metric, while S-EDL uses $\pi_\beta$ as a training signal that drives gradient updates (\cref{sec:gradient}). 

\textbf{Self-evidence vector and Dirichlet prior.}\quad
We finally convert $\hat\pi_\beta$ into an evidential prior by distributing the sample budget $N$ over the semantic classes according to the estimated tempered distribution:
\begin{equation}\label{eq:evidence_prior}
    e_s := N \cdot \hat\pi_\beta(s), \qquad \alpha_s := e_s + 1, \qquad s \in \mathcal{S}.
\end{equation}
Because $\hat\pi_\beta$ is a probability vector, the Dirichlet strength is determined a priori by the sample budget and the semantic class count:
%\begin{equation} \label{eq:strength}
    $\alz = \sum_{s \in \mathcal{S}} \alpha_s = N + |\mathcal{S}|$.
%\end{equation}
This differs from conventional EDL, where the model freely predicts the per-class evidences $e_k$ (equivalently, the Dirichlet parameters $\alpha_k = e_k + 1$), so the total strength $\alz = \sum_k \alpha_k$ is not fixed a priori. Here $\alz = N + |\mathcal{S}|$ is pinned by construction, and all flexibility in the prior is routed through $\hat\pi_\beta$. The resulting $\Dir(\cdot; \alv)$ is fed into the MSE loss of \cref{eq:mse}, completing the differentiable path promised by desideratum~(b), as the entire computational chain $\elli \rightarrow \ui \rightarrow \hat{\pi}_\beta \rightarrow \ev \rightarrow \alv \rightarrow \LMSE$ is differentiable w.r.t. $\thetav$.

In training, we optimize the $\LMSE$ jointly with standard supervised fine-tuning (SFT). The SFT loss retains the next-token learning signal, while the evidential term provides the semantic-level calibration signal. Let $\LSFT$ denote the next-token cross-entropy loss on a pair $(x, y)$. Our final objective then is
\begin{equation*}
    \LSEDL(\thetav) = \LSFT(\thetav) + \lmseweight \LMSE(\thetav),
\end{equation*}
where $\lmseweight$ is the weight for the evidential loss. The next subsection analyzes the gradient of $\LMSE$.

\subsection{Gradient structure of S-EDL}
\label{sec:gradient}

Having constructed the self-evidence vector and the Dirichlet prior, we next examine the optimization signal S-EDL produces. The result below shows that the gradient with respect to per-sample log-likelihood factors into a normalized sample weight and a class-level correction term.

% \begin{theorem}\label{thm:ll_gradient}
% For each class $s \in \mathcal{S}$, let $d_s := \partial \LMSE / \partial \hat\pi_\beta(s)$ denote the gradient of the MSE loss with respect to the estimated tempered distribution at class $s$, and let $\bar d := \sum_{r \in \mathcal{S}} \hat\pi_\beta(r)\, d_r$ denote the $\hat\pi_\beta$-weighted mean of these per-class gradients. For each sample $i \in [N]$, let $w^{(i)} := \ui / \sum_{j=1}^{N} u^{(j)}$ denote the normalized self-evidence weight of sample $i$, i.e., its share of the total self-evidence across all $N$ samples. Then
% \begin{equation}\label{eq:ds}
%     d_s = \frac{N \left[\, 2\alpha_s - 2(N + |\mathcal{S}| + 1)\, q_s + 1 \,\right]}{(N + |\mathcal{S}|)(N + |\mathcal{S}| + 1)},
% \end{equation}
% where $q_s := \mathbbm{1}[s = s^*]$, and
% \begin{equation}\label{eq:dl_dell}
%     \frac{\partial \LMSE}{\partial \elli} = (\beta - 1)\, w^{(i)} \left( d_{s^{(i)}} - \bar d \right).
% \end{equation}
% \end{theorem}

\begin{theorem}\label{thm:ll_gradient}
For each class $s \in \mathcal{S}$, let $d_s := \partial \LMSE / \partial \hat\pi_\beta(s)$, and let $\bar d := \sum_{r \in \mathcal{S}} \hat\pi_\beta(r)\, d_r$ denote their weighted mean. Let $w^{(i)} := \ui / \sum_{j=1}^{N} u^{(j)}, i\in[N]$ be the normalized sample weight. Then
\begin{equation}\label{eq:dl_dell}
    \frac{\partial \LMSE}{\partial \elli} = (\beta - 1)\, w^{(i)} \left( d_{s^{(i)}} - \bar d \right).
\end{equation}
\end{theorem}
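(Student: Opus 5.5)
The plan is a direct chain-rule computation through the map $\elli \mapsto \ui \mapsto U_s \mapsto \hat\pi_\beta(s) \mapsto \LMSE$.

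We treat the semantic labels $s^{(i)}$ and the target $\qv$ as constants with respect to $\elli$. Only the weights $\ui$ depend on $\elli$. Since $\LMSE$ depends on $\hat\pi_\beta$ only through $\alpha_s = N\hat\pi_\beta(s)+1$, we can regard $\LMSE$ as a function of the vector $(\hat\pi_\beta(r))_{r\in\mathcal{S}}$, with partial derivatives $d_r$ as defined in the statement. The chain rule then gives
\[
\frac{\partial \LMSE}{\partial \elli} = \sum_{r\in\mathcal{S}} d_r \,\frac{\partial \hat\pi_\beta(r)}{\partial \elli}.
\]
The whole proof reduces to computing the Jacobian $\partial \hat\pi_\beta(r)/\partial \elli$.

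For the first link, \cref{eq:self_evidence_weight} gives $\partial \ui/\partial \elli = (\beta-1)\ui$, and $\partial u^{(j)}/\partial \elli = 0$ for $j\neq i$. Write $U := \sum_{r} U_r = \sum_{j=1}^N u^{(j)}$, which holds because every sample lies in exactly one class of $\mathcal{S}$. Then $\partial U_r/\partial \elli = (\beta-1)\ui\,\mathbbm{1}[s^{(i)}=r]$ and $\partial U/\partial \elli = (\beta-1)\ui$. Applying the quotient rule to $\hat\pi_\beta(r)=U_r/U$ gives
\[
\frac{\partial \hat\pi_\beta(r)}{\partial \elli} = \frac{(\beta-1)\ui}{U}\Bigl(\mathbbm{1}[s^{(i)}=r] - \hat\pi_\beta(r)\Bigr) = (\beta-1)\,w^{(i)}\Bigl(\mathbbm{1}[s^{(i)}=r] - \hat\pi_\beta(r)\Bigr).
\]
This is the usual softmax-type Jacobian. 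We then substitute it into the chain rule. The indicator term picks out $d_{s^{(i)}}$, and the second term produces $\sum_r \hat\pi_\beta(r) d_r = \bar d$. Together these yield \cref{eq:dl_dell}.

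No step is a genuine obstacle; the computation is routine. The one point needing care is the meaning of $d_s$. The components of $\hat\pi_\beta$ are constrained to sum to one, so $d_s$ must be read as the partial derivative of $\LMSE$ viewed as a function on all of $\R^{|\mathcal{S}|}$ through the closed form in \cref{eq:mse}. This is well defined because that closed form makes sense for any $\alv$. The identity does not depend on this extension: adding a constant $c$ to every $d_r$ leaves $d_{s^{(i)}}-\bar d$ unchanged, since $\sum_r\hat\pi_\beta(r)=1$. We will state this remark explicitly. Optionally, we can also note that the total strength $\alz = N+|\mathcal{S}|$ is constant, which simplifies any explicit formula for $d_s$, although the theorem does not require one.
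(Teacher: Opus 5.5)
Your proposal is correct and follows the paper's proof: both are a direct chain rule through $\elli \to \ui \to U_s \to \hat\pi_\beta \to \LMSE$. The only difference is cosmetic. The paper contracts against $d_r$ at the level of $U_r$, obtaining $\partial \LMSE/\partial U_r = (d_r - \bar d)/U$, whereas you first assemble the softmax-type Jacobian $\partial \hat\pi_\beta(r)/\partial \elli$ and then contract.

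Your remark that $d_s$ is determined only up to a common additive constant is correct and worth keeping, and the paper's proof leaves it implicit. The paper fixes that constant by holding $\alz = N + |\mathcal{S}|$ fixed, whereas differentiating the closed form with $\alz = \sum_k \alpha_k$ shifts every $d_r$ equally. Either way, the identity is unchanged.
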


The proof is shown in \cref{sec:app:proof_ll_gradient}. \cref{thm:ll_gradient} shows that each sample's contribution is determined by the sample weight $w^{(i)}$ and the class-level signal $d_{s^{(i)}} - \bar d$.
Since $w^{(i)} \propto \pthetav(\zi | x)^{\beta - 1}$, samples with larger model likelihoods receive larger gradient weights. Optimization is therefore concentrated on the ``high-probability region'' of $\pthetav(\cdot | x)$. We next characterize its direction on the two semantic classes most relevant to calibration: the ground-truth class and the most overconfident incorrect class.

\begin{corollary}\label{thm:sign_of_grad}
Assume $\beta>1$. For any sample $i\in[N]$ in the correct class ($s^{(i)}=s^*$), $\partial \LMSE / \partial \ell^{(i)} \leq 0$, with strict inequality if $\hat\pi_\beta(s^*) < 1$. Conversely, for any sample in the most overconfident incorrect class $s^- := \argmax_{s \neq s^*} \hat\pi_\beta(s)$, it holds that $\partial \LMSE/\partial \ell^{(i)} \ge 0$, with strict inequality if $\hat\pi_\beta(s^*) > 0$. 
\end{corollary}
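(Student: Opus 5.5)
The plan is to combine \cref{thm:ll_gradient} with an explicit formula for $d_s$. Since $\beta>1$ and $w^{(i)} = \ui/\sum_j u^{(j)} > 0$ (each $\ui = \exp((\beta-1)\elli)$ is strictly positive), the sign of $\partial\LMSE/\partial\elli$ equals the sign of $d_{s^{(i)}} - \bar d$. It is therefore enough to compare the per-class derivatives $d_s$ with their $\hat\pi_\beta$-weighted mean. Throughout I take $s^*\in\mathcal{S}$, as arranged by the construction in \cref{sec:unobserved_gt}, so that $q_s = \mathbbm{1}[s=s^*]$ is a genuine one-hot vector over $\mathcal{S}$. For the second claim I also assume $|\mathcal{S}|\ge 2$, so that $s^-$ exists.

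\textbf{Step 1: closed form of $d_s$.} By \cref{eq:evidence_prior}, $\alz = N+|\mathcal{S}|$ does not depend on $\hat\pi_\beta$, and $\alpha_s = N\hat\pi_\beta(s)+1$. Put $p_s := \alpha_s/\alz$. Then \cref{eq:mse} becomes
\[
\sum_s (q_s-p_s)^2 + \frac{1}{\alz+1}\sum_s p_s(1-p_s).
\]
It is separable in the $p_s$, and $\partial p_s/\partial\hat\pi_\beta(s) = N/\alz$. Differentiating and simplifying gives
\[
d_s = \frac{N\,[\,2\alpha_s - 2(\alz+1)q_s + 1\,]}{\alz(\alz+1)} = c\,(2\alpha_s+1) - c'\,q_s,
\]
with $c := N/(\alz(\alz+1)) > 0$ and $c' := 2N/\alz > 0$. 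So $d_s$ is increasing and affine in $\alpha_s$, with an extra downward shift of $c'$ only at the ground-truth class.

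\textbf{Step 2: pairwise comparisons.} Write
\[
d_{s} - \bar d = \sum_{r\in\mathcal{S}} \hat\pi_\beta(r)\,(d_s - d_r)
\]
and look at the sign of each term.

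For $s=s^*$ and any $r\ne s^*$:
\[
d_{s^*}-d_r = 2c\,(\alpha_{s^*}-\alpha_r-\alz-1).
\]
This is strictly negative, because $\alpha_{s^*}\le \alz$ and $\alpha_r\ge 1$. The $r=s^*$ term is zero. Hence $d_{s^*}-\bar d\le 0$, with equality iff $\hat\pi_\beta(r)=0$ for all $r\ne s^*$, i.e.\ iff $\hat\pi_\beta(s^*)=1$.

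For $s=s^-$ and any $r\notin\{s^*,s^-\}$:
\[
d_{s^-}-d_r = 2c\,(\alpha_{s^-}-\alpha_r)\ge 0.
\]
This holds because $s^-$ maximizes $\hat\pi_\beta$, hence $\alpha$, among incorrect classes. The $r=s^-$ term vanishes. The $r=s^*$ term is $\hat\pi_\beta(s^*)\,(d_{s^-}-d_{s^*})$, and $d_{s^-}-d_{s^*}>0$ by the previous paragraph. So $d_{s^-}-\bar d\ge 0$, and it is strictly positive whenever $\hat\pi_\beta(s^*)>0$.

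\textbf{Step 3: conclude.} Multiply by $(\beta-1)w^{(i)}>0$ via \cref{eq:dl_dell} to obtain both sign statements and their strictness conditions.

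The main obstacle is only Step 1: computing $d_s$ correctly. This requires exploiting that $\alz$ is pinned by construction, so it contributes no cross-class derivative terms. It also requires tracking the factor $N/\alz$ from the chain rule. Once the affine form $c(2\alpha_s+1)-c'q_s$ is in hand, the sign analysis reduces to the elementary inequality $\alpha_{s^*}-\alpha_r < \alz+1$.
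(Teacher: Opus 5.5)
Your proof is correct and takes essentially the same route as the paper. You derive the same closed form for $d_s$ used in the proof of \cref{thm:ll_gradient}, show that $d_{s^*}$ is the smallest and $d_{s^-}$ the largest class derivative, and conclude from $\bar d$ being a $\hat\pi_\beta$-weighted convex combination, with strictness coming from positive weight on the other classes. The only difference is cosmetic: you get the ordering from pairwise differences $d_s - d_r$, whereas the paper uses the sign split $d_{s^*}<0<d_s$ for $s\neq s^*$, and your version has the small bonus of not depending on how $\partial \LMSE/\partial\hat\pi_\beta(s)$ is extended off the simplex.
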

The proof is detailed in \cref{sec:app:proof_sign_of_grad}. Consequently, minimizing $\LMSE$ steers the model in the desired direction. In any realistic training scenario where the model's semantic distribution has not entirely collapsed ($0<\hat\pi_\beta(s^*)<1$), the objective actively promotes the sequences in ground-truth semantic class, while simultaneously penalizing those in the most overconfident incorrect one.
The next corollary shows that the per-sample gradients are balanced across the samples.

\begin{corollary}\label{thm:conservation}
For the gradients $\partial \LMSE/\partial \elli$ in \cref{thm:ll_gradient}, their sum $\sum_{i=1}^{N} \partial \LMSE/\partial \elli = 0$. 
\end{corollary}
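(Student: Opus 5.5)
We would prove this directly from the closed form in \cref{thm:ll_gradient}, so no further differentiation through the Dirichlet loss is needed. By \cref{eq:dl_dell},
\begin{equation*}
    \sum_{i=1}^{N} \frac{\partial \LMSE}{\partial \elli} = (\beta - 1) \sum_{i=1}^{N} w^{(i)} \bigl( d_{s^{(i)}} - \bar d \bigr) = (\beta - 1)\Bigl( \sum_{i=1}^{N} w^{(i)} d_{s^{(i)}} - \bar d \sum_{i=1}^{N} w^{(i)} \Bigr).
\end{equation*}
The plan is to evaluate the two sums on the right-hand side separately and show that they coincide with $\bar d$ and $1$.

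\textbf{Second sum.} By definition $w^{(i)} = \ui / \sum_{j} u^{(j)}$, so $\sum_i w^{(i)} = 1$ immediately. The second term is therefore just $\bar d$.

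\textbf{First sum.} We regroup the sum over samples by semantic class. Every sample index $i$ belongs to exactly one class $s^{(i)} \in \mathcal{S}$, so
\begin{equation*}
    \sum_{i=1}^{N} w^{(i)} d_{s^{(i)}} = \sum_{s \in \mathcal{S}} d_s \sum_{i\,:\,s^{(i)}=s} w^{(i)} = \sum_{s \in \mathcal{S}} d_s \frac{U_s}{\sum_{j} u^{(j)}}.
\end{equation*}
Since the classes partition the samples, $\sum_j u^{(j)} = \sum_{r \in \mathcal{S}} U_r$. By \cref{eq:estimator}, the fraction is therefore exactly $\hat\pi_\beta(s)$. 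The first sum thus equals $\sum_s \hat\pi_\beta(s) d_s = \bar d$.

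Subtracting the two sums gives $(\beta - 1)(\bar d - \bar d) = 0$, which is the claim. The only step needing care is the identity $\sum_{i:s^{(i)}=s} w^{(i)} = \hat\pi_\beta(s)$. It relies on $\mathcal{S}$ being exactly the set of observed classes, so that the normalizer over classes equals the normalizer over samples. If, per \cref{sec:unobserved_gt}, $\mathcal{S}$ is augmented with an unobserved ground-truth class, that class has $U_s = 0$ and contributes zero to both sums, so the identity still holds.

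We would also add a remark on why the result is expected. $\hat\pi_\beta$ is invariant to rescaling all $\ui$ by a common factor. Such a rescaling corresponds to shifting all $\elli$ by the same constant, so the directional derivative of $\LMSE$ along $(1,\dots,1)$ must vanish, which is exactly the statement. This invariance argument could serve as an alternative one-line proof.
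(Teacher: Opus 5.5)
Your proof is correct and is exactly the paper's argument: expand via \cref{thm:ll_gradient}, use $\sum_i w^{(i)}=1$, and regroup by class using $\sum_{i:s^{(i)}=s} w^{(i)}=\hat\pi_\beta(s)$; your shift-invariance remark is also a valid one-line alternative. One correction to your aside on the augmented class: at training time the paper does not set $U_{s^*}=0$ for an unobserved ground-truth class, but gives it a positive weight from the penalized log-likelihood of $y$, so the sum over the $N$ sampled log-likelihoods alone is then nonzero and conservation holds only once the gradient with respect to the log-likelihood of $y$ is included---this lies outside the corollary's stated setting, and your invariance argument covers it automatically.
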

See \cref{sec:app:proof_conservation} for the proof. S-EDL therefore \emph{conservatively reallocates} log-likelihoods rather than uniformly inflating or suppressing them: combined with the sign structure above, the update shifts probability mass away from overconfident incorrect classes and toward the ground-truth class.

%% file: sections/4_experiments.tex
\section{Experiments}
\label{sec:experiments}

\subsection{Experimental setup}
\label{sec:exp_setup}

\textbf{Datasets.}\quad
We conduct training and evaluation on four open-ended STEM datasets: GSM8K~\citep{cobbe2021training}, MATH~\citep{hendrycks2021measuring}, ChemistryQA~\citep{wei2021chemistryqa}, and FormulaReasoning~\citep{li2024formulareasoning}, a physics QA dataset. These benchmarks share two properties that make them well suited to our study. First, final answers admit canonical numerical or symbolic forms, so the collapsing function that groups sampled responses into semantic classes (a symbolic verifier or an LLM-as-judge) incurs comparatively low evaluation noise. Second, they often require multi-step reasoning, so calibration over complete semantic outcomes is more relevant than calibration over individual tokens. We train on the training split of each dataset and evaluate on the corresponding test split, with the exception of MATH, where we follow standard practice~\citep{hendrycks2021measuring,lightman2024let} and evaluate on MATH-500, a subset of the MATH test split widely used in other literature. Dataset construction is detailed in \cref{sec:app:dataset_construction}.

\textbf{Models and training.}\quad
We fine-tune Qwen3-8B-Base~\citep{qwen3technicalreport}, Ministral3-8B-Base~\citep{liu2026ministral}, and MiMo-7B-Base~\citep{xiaomi2025mimo} with LoRA adapters~\citep{hu2022lora} of rank $r_{\text{LoRA}} = 16$ and $\alpha_{\text{LoRA}} = 32$. For S-EDL, $\beta = 1.2$ and MSE loss weight $\lmseweight = 5.0$ are held fixed across all models and datasets. In addition, the number of generations per prompt during training $N_{\text{train}}$ is set as large as GPU memory permits. On GSM8K, where outputs are comparatively short, we use $N_{\text{train}} = 32$ for all models. On the remaining three datasets, the memory ceiling reduces to $N_{\text{train}} = 8$ for Qwen3-8B-Base and $N_{\text{train}} = 12$ for Ministral3-8B-Base and MiMo-7B-Base. Complete optimization hyperparameters are in \cref{sec:app:training_hyperparameters}.

\textbf{Evaluation protocol.}\quad
We use $\texttt{top\_p} = 0.9$ for all models, with \texttt{temperature} $0.8$ for Qwen3-8B-Base, $0.6$ for Ministral3-8B-Base, and $0.7$ for MiMo-7B-Base. The per-model temperature is chosen on a held-out validation split to approximately match the entropy of the respective model, and is fixed across fine-tuning methods. 
Following \citet{nakkiran2026trained}, we report two semantic-level metrics computed from $N_{\text{eval}} = 50$ generations per question: \emph{Semantic Accuracy} (S-Acc.), the accuracy of majority voting over semantically clustered responses, and \emph{Smooth ECE} (smECE), a kernel-smoothed calibration error metric. As a single-sequence-level metric, we additionally report \emph{Pass@1} accuracy.

\textbf{Baselines.}\quad Besides standard SFT, we compare S-EDL against baselines spanning three categories: (i) \emph{token-level calibration}, including Label Smoothing (LS)~\citep{huang2025calibrated} and RCFT~\citep{xiao2025restoring}, which regularize token-wise output distributions; (ii) \emph{Bayesian ensembling}, represented by BLoB~\citep{wang2024blob}; and (iii) \emph{sequence-level correctness optimization}, represented by Dr.GRPO~\citep{liu2025understanding} with a binary $0/1$ correctness reward. We include Dr.GRPO as a control that isolates the effect of sequence-level supervision on calibration: under the binary reward, it maximizes the probability of the correct semantic class without an explicit calibration term, which lets us assess whether operating at the sequence level alone accounts for the gains of S-EDL or whether the Brier-style calibration objective like ~\cref{eq:mse} is required. The details about the baselines are outlined in \cref{sec:app:baseline_hyperparameters}.

\subsection{Calibration performance on in-distribution datasets}
\label{sec:in_distribution_evaluation}

\begin{table}[t]
\centering
\caption{Calibration and accuracy across four open-ended question answering datasets. We report the single-sequence metric \tcbox[on line, colback=headerblue, colframe=headerblue, boxrule=0pt, arc=4pt, left=0.5pt, right=0.5pt, top=0pt, bottom=0pt]{Pass@1} ($\uparrow$) and two semantic-level metrics: \tcbox[on line, colback=headerpink, colframe=headerpink, boxrule=0pt, arc=4pt, left=0.5pt, right=0.5pt, top=0pt, bottom=0pt]{Semantic Accuracy} (S-Acc., $\uparrow$) and \tcbox[on line, colback=headerpink, colframe=headerpink, boxrule=0pt, arc=4pt, left=0.5pt, right=0.5pt, top=0pt, bottom=0pt]{Smooth ECE} (smECE, $\downarrow$). Entries show means over three seeds with standard deviations as smaller subscripts ($\text{mean}_{\text{std}}$).
The Average columns report means across the four datasets.
The \textbf{best} and \underline{second-best} results are highlighted.}
\label{tab:in_distribution_acc_vs_smece}
\resizebox{\textwidth}{!}{%
\setlength{\tabcolsep}{1.5pt}
\begin{tabular}{l|ccc|ccc|ccc|ccc|ccc}
\toprule
\multirow{2}{*}{\textbf{Method}} & \multicolumn{3}{c|}{\textbf{GSM8K}} & \multicolumn{3}{c|}{\textbf{MATH-500}} & \multicolumn{3}{c|}{\textbf{ChemistryQA}} & \multicolumn{3}{c|}{\textbf{FormulaReasoning}} & \multicolumn{3}{c}{\textbf{Average}} \\
& \cellcolor{headerblue}Pass@1$\uparrow$ & \cellcolor{headerpink}S-Acc.$\uparrow$ & \cellcolor{headerpink}smECE$\downarrow$ & \cellcolor{headerblue}Pass@1$\uparrow$ & \cellcolor{headerpink}S-Acc.$\uparrow$ & \cellcolor{headerpink}smECE$\downarrow$ & \cellcolor{headerblue}Pass@1$\uparrow$ & \cellcolor{headerpink}S-Acc.$\uparrow$ & \cellcolor{headerpink}smECE$\downarrow$ & \cellcolor{headerblue}Pass@1$\uparrow$ & \cellcolor{headerpink}S-Acc.$\uparrow$ & \cellcolor{headerpink}smECE$\downarrow$ & \cellcolor{headerblue}Pass@1$\uparrow$ & \cellcolor{headerpink}S-Acc.$\uparrow$ & \cellcolor{headerpink}smECE$\downarrow$ \\
\midrule
\multicolumn{16}{c}{\textit{Fine-tuned model: Qwen3-8B-Base}} \\
\midrule
SFT & 70.8$_{1.2}$ & \underline{93.4}$_{0.4}$ & 20.2$_{0.4}$ & 49.4$_{2.8}$ & 68.7$_{2.7}$ & 16.1$_{0.6}$ & 47.9$_{0.7}$ & 52.9$_{1.4}$ & 20.7$_{0.6}$ & 75.6$_{0.8}$ & 87.5$_{0.3}$ & 8.9$_{0.3}$ & 60.9 & 75.6 & 16.5 \\
RCFT & 79.1$_{0.2}$ & 92.8$_{0.1}$ & 12.9$_{0.2}$ & 50.1$_{0.3}$ & 71.1$_{1.4}$ & 17.5$_{1.6}$ & 49.7$_{3.9}$ & 55.4$_{4.3}$ & 15.1$_{1.5}$ & 78.4$_{0.3}$ & \underline{89.5}$_{0.2}$ & 8.9$_{0.1}$ & 64.3 & 77.6 & 13.6 \\
LS & 79.1$_{1.0}$ & 92.7$_{0.2}$ & 12.5$_{0.6}$ & 49.6$_{0.2}$ & 70.7$_{0.7}$ & 17.5$_{0.7}$ & 50.7$_{0.7}$ & 58.1$_{0.3}$ & \textbf{11.8}$_{0.7}$ & 76.8$_{1.0}$ & 88.9$_{0.4}$ & 9.4$_{0.8}$ & 64.0 & 77.6 & 12.8 \\
BLoB & 79.1$_{0.1}$ & 92.5$_{0.9}$ & 12.4$_{0.3}$ & 49.9$_{0.3}$ & 69.5$_{1.5}$ & 15.9$_{1.6}$ & 52.4$_{0.9}$ & 57.9$_{0.2}$ & 14.9$_{1.3}$ & 78.4$_{1.3}$ & \textbf{90.3}$_{0.6}$ & 10.6$_{0.1}$ & 64.9 & 77.6 & 13.4 \\
Dr.GRPO & \underline{80.8}$_{0.3}$ & 93.2$_{0.6}$ & \underline{11.8}$_{0.6}$ & \underline{56.2}$_{0.5}$ & \underline{74.6}$_{0.6}$ & \underline{15.5}$_{0.5}$ & \textbf{53.0}$_{0.3}$ & \underline{59.2}$_{0.2}$ & 14.4$_{1.4}$ & \underline{79.5}$_{0.4}$ & \underline{89.5}$_{0.5}$ & \underline{7.2}$_{0.6}$ & \underline{67.4} & \underline{79.1} & \underline{12.2} \\
S-EDL & \textbf{84.1}$_{1.0}$ & \textbf{94.3}$_{0.3}$ & \textbf{9.5}$_{0.2}$ & \textbf{64.6}$_{0.5}$ & \textbf{81.3}$_{1.2}$ & \textbf{14.1}$_{0.7}$ & \underline{52.8}$_{0.2}$ & \textbf{59.9}$_{0.6}$ & \underline{12.6}$_{0.9}$ & \textbf{80.6}$_{0.2}$ & \underline{89.5}$_{0.3}$ & \textbf{5.8}$_{0.7}$ & \textbf{70.5} & \textbf{81.3} & \textbf{10.5} \\
\midrule
\multicolumn{16}{c}{\textit{Fine-tuned model: Ministral3-8B-Base}} \\
\midrule
SFT & 64.4$_{0.4}$ & 80.9$_{0.4}$ & 12.0$_{0.3}$ & 35.6$_{0.7}$ & 54.2$_{1.3}$ & 11.2$_{0.2}$ & \textbf{48.2}$_{0.8}$ & \underline{55.4}$_{0.6}$ & 15.0$_{1.0}$ & 72.9$_{2.1}$ & 85.3$_{0.8}$ & 10.6$_{0.5}$ & 55.3 & 69.0 & 12.2 \\
RCFT & 63.4$_{0.6}$ & 78.8$_{0.5}$ & \underline{10.9}$_{0.2}$ & 37.6$_{0.4}$ & \underline{55.7}$_{1.6}$ & 10.8$_{1.6}$ & \underline{48.1}$_{1.0}$ & 55.1$_{0.3}$ & 13.1$_{1.4}$ & 74.7$_{1.8}$ & 86.5$_{0.8}$ & 9.0$_{1.0}$ & 55.9 & 69.0 & \underline{10.9} \\
LS & 58.5$_{0.3}$ & 77.2$_{1.0}$ & 13.4$_{0.1}$ & 34.1$_{4.5}$ & 52.7$_{4.7}$ & 11.4$_{1.6}$ & 47.3$_{1.6}$ & 53.9$_{0.3}$ & \underline{12.8}$_{1.3}$ & 69.0$_{1.7}$ & 86.2$_{3.4}$ & 13.0$_{2.9}$ & 52.2 & 67.5 & 12.6 \\
BLoB & 63.5$_{0.2}$ & 80.5$_{0.3}$ & 13.1$_{0.5}$ & 37.4$_{1.0}$ & 54.3$_{0.1}$ & \underline{9.4}$_{1.0}$ & 47.7$_{0.8}$ & 53.8$_{1.6}$ & 14.9$_{1.4}$ & 74.4$_{1.0}$ & 86.1$_{0.4}$ & 8.8$_{1.1}$ & 55.7 & 68.7 & 11.6 \\
Dr.GRPO & \underline{76.5}$_{1.7}$ & \underline{83.0}$_{0.8}$ & 15.5$_{3.0}$ & \underline{48.0}$_{1.1}$ & 55.4$_{0.4}$ & \textbf{7.7}$_{0.5}$ & 45.0$_{0.8}$ & 52.8$_{1.0}$ & 14.2$_{0.4}$ & \underline{80.4}$_{0.9}$ & \underline{89.5}$_{0.6}$ & \underline{8.2}$_{0.6}$ & \underline{62.5} & \underline{70.2} & 11.4 \\
S-EDL & \textbf{82.0}$_{0.5}$ & \textbf{89.4}$_{0.5}$ & \textbf{5.4}$_{0.4}$ & \textbf{50.8}$_{1.3}$ & \textbf{68.2}$_{0.7}$ & 11.3$_{1.3}$ & 47.8$_{1.0}$ & \textbf{55.9}$_{1.9}$ & \textbf{11.5}$_{1.2}$ & \textbf{81.0}$_{0.9}$ & \textbf{89.7}$_{0.3}$ & \textbf{5.9}$_{0.8}$ & \textbf{65.4} & \textbf{75.8} & \textbf{8.6} \\
\midrule
\multicolumn{16}{c}{\textit{Fine-tuned model: MiMo-7B-Base}} \\
\midrule
SFT & 73.8$_{1.1}$ & 87.5$_{0.1}$ & 11.5$_{0.2}$ & 34.0$_{0.5}$ & 53.9$_{0.1}$ & 14.3$_{0.4}$ & \textbf{46.7}$_{0.9}$ & 53.4$_{0.7}$ & 11.9$_{0.3}$ & 70.4$_{0.7}$ & 83.9$_{0.6}$ & 10.9$_{1.0}$ & 56.2 & 69.7 & 12.1 \\
RCFT & 70.9$_{0.8}$ & 86.8$_{0.4}$ & 14.5$_{0.6}$ & 33.7$_{0.3}$ & 53.3$_{0.6}$ & 13.0$_{0.7}$ & 43.6$_{1.0}$ & 54.0$_{0.4}$ & 7.9$_{1.9}$ & 63.9$_{1.2}$ & 82.1$_{1.7}$ & 15.1$_{2.0}$ & 53.0 & 69.1 & 12.7 \\
LS & 71.3$_{0.9}$ & 87.3$_{0.6}$ & 13.5$_{0.2}$ & 32.9$_{0.3}$ & 52.5$_{0.8}$ & 13.2$_{0.8}$ & 41.8$_{1.5}$ & 51.7$_{2.4}$ & \textbf{6.8}$_{0.9}$ & 67.9$_{3.4}$ & \underline{85.5}$_{4.5}$ & 14.2$_{1.2}$ & 53.5 & 69.3 & 11.9 \\
BLoB & 71.2$_{0.2}$ & 88.0$_{0.3}$ & 14.0$_{0.1}$ & 33.4$_{0.4}$ & 50.9$_{2.8}$ & \underline{11.6}$_{2.4}$ & 43.9$_{2.0}$ & 53.9$_{1.5}$ & 9.2$_{1.0}$ & 64.4$_{2.1}$ & 81.1$_{2.9}$ & 13.7$_{2.0}$ & 53.2 & 68.5 & 12.1 \\
Dr.GRPO & \textbf{81.0}$_{0.9}$ & \underline{90.4}$_{1.1}$ & \textbf{7.7}$_{1.0}$ & \underline{51.0}$_{1.3}$ & \underline{66.3}$_{0.3}$ & \textbf{10.7}$_{1.0}$ & \underline{45.3}$_{1.0}$ & \underline{54.9}$_{1.1}$ & 11.1$_{1.1}$ & \underline{72.7}$_{1.6}$ & 84.8$_{0.4}$ & \underline{9.7}$_{0.4}$ & \underline{62.5} & \underline{74.1} & \underline{9.8} \\
S-EDL & \underline{81.0}$_{0.4}$ & \textbf{90.7}$_{0.2}$ & \underline{8.5}$_{0.4}$ & \textbf{56.0}$_{0.4}$ & \textbf{72.6}$_{1.7}$ & 12.3$_{2.5}$ & 45.3$_{1.7}$ & \textbf{56.1}$_{1.4}$ & \underline{7.5}$_{0.6}$ & \textbf{74.9}$_{0.4}$ & \textbf{87.3}$_{0.4}$ & \textbf{9.4}$_{1.3}$ & \textbf{64.3} & \textbf{76.7} & \textbf{9.4} \\
\bottomrule
\end{tabular}%
}
\end{table}

% \paragraph{Sequence-level regularization is more effective than token-level for semantic calibration.}
As shown in \cref{tab:in_distribution_acc_vs_smece}, S-EDL attains the lowest average smECE across all three models, improving over the strongest token-level baseline by 2.3 points on Qwen3-8B-Base (10.5 vs.\ LS's 12.8), 2.3 points on Ministral3-8B-Base (8.6 vs.\ RCFT's 10.9), and 2.5 points on MiMo-7B-Base (9.4 vs.\ LS's 11.9). 
Dr.GRPO also outperforms every token-level baseline in average smECE on Qwen3-8B-Base and MiMo-7B-Base, but its 11.4 on Ministral3-8B-Base does not improve over RCFT's 10.9, suggesting that operating at the sequence level alone is not sufficient and that the choice of training signal also matters.

The underperformance of token-level methods may reflect a level mismatch between supervision and evaluation: LS and RCFT calibrate per-token conditional distributions, whereas sequence-level objectives act on complete responses; S-EDL further applies its loss to a sampled estimate of the semantic distribution, matching the granularity of the semantic metrics.
This mismatch may be amplified in chain-of-thought generation, where a small subset of decisive tokens (e.g.,  a branching choice indicated by ``alternatively'') largely determines the semantic outcome~\citep{wang2025beyond}, while most tokens are routine continuations; per-token regularization applies to both groups, potentially diluting the signal on the tokens that actually matter.

% \textbf{S-EDL alleviates the accuracy-calibration trade-off that other baselines incur.}\quad
Furthermore, we observe that several baselines improve over SFT on one metric at the cost of the other.
LS provides a representative example: on MiMo-7B-Base, it lowers average Pass@1 from $56.2$ to $53.5$ while improving smECE only marginally, from $12.1$ to $11.9$.
Averaged across the four datasets, S-EDL improves over SFT on all three metrics for every model, e.g., with $+9.6$ Pass@1, $+5.7$ S-Acc, and $-6.0$ smECE on Qwen3-8B-Base. \cref{fig:accuracy_calibration} in \cref{sec:app:acc_calibration_plane} visualizes the averaged S-Acc $\times$ smECE plane. S-EDL consistently lies on the Pareto frontier of the plane.
The structure of the evidential loss may help explain this behavior. \cref{eq:mse} is a Brier-style proper scoring rule, which penalizes both classification error and miscalibration simultaneously. Dr.GRPO with a binary $0/1$ reward, maximizes the probability of the correct semantic class without an explicit calibration term, and may improve accuracy at the cost of miscalibration.

\subsection{Cross-domain evaluation}
\label{sec:cross_domain_evaluation}

To test the transferability of each method, we train on MATH and evaluate on three other cross-domain datasets as shown in Table~\ref{tab:cross_domain}. This setting tests not only whether accuracy transfers, but also whether semantic confidence remains aligned with correctness.
The grouping into sequence-level (Dr.GRPO, S-EDL) and token-level/Bayesian (LS, RCFT, BLoB) baselines from \cref{sec:in_distribution_evaluation} separates more sharply in average Pass@1 in cross-domain evaluation than in-domain.
The average Pass@1 gap between the better-performing sequence-level method and the strongest token-level/Bayesian baseline widens from $5.6 / 9.5 / 10.8$ points in-domain to $13.8 / 16.5 / 13.6$ points cross-domain on Qwen3-8B-Base, Ministral3-8B-Base, and MiMo-7B-Base, respectively.

A consistent diagnostic appears against the SFT reference: on Qwen3-8B-Base and MiMo-7B-Base, none of LS, RCFT, or BLoB improves average cross-domain smECE over plain SFT.
One possible interpretation is that per-token regularization and Bayesian ensembling shape conditional distributions tied to the lexical statistics of the training distribution, so the resulting effect on the implied semantic distribution need not be preserved when input statistics change. 
In contrast, objectives defined over complete responses are less tied to token-level statistics and better aligned with the semantic nature.

The two sequence-level methods themselves, however, do not behave identically under shift.
In cross-domain evaluation, the average Pass@1 of S-EDL exceeds that of Dr.GRPO by $7.3 / 6.3 / 3.3$ points across the three models.
Its average cross-domain smECE is lower by $4.7 / 3.0 / 0.1$ points.
While Dr.GRPO is effective at optimizing correctness, its binary reward does not explicitly regularize the semantic confidence.
The performance gap between the two methods suggests that coupling sequence-level training with an explicit evidential calibration objective could contribute to more generalizable improvements in accuracy and calibration.

\begin{table}[t]
\centering
\caption{Cross-domain evaluation. A $\rightarrow$ B denotes training on dataset A and evaluating on dataset B.}
\label{tab:cross_domain}
\resizebox{0.96\textwidth}{!}{%
\setlength{\tabcolsep}{1.5pt}
\begin{tabular}{l|ccc|ccc|ccc|ccc}
\toprule
\multirow{2}{*}{\textbf{Method}} & \multicolumn{3}{c|}{\textbf{MATH $\rightarrow$ GSM8K}} & \multicolumn{3}{c|}{\textbf{MATH $\rightarrow$ ChemistryQA}} & \multicolumn{3}{c|}{\textbf{MATH $\rightarrow$ FormulaReasoning}} & \multicolumn{3}{c}{\textbf{Average}} \\
& \cellcolor{headerblue}Pass@1$\uparrow$ & \cellcolor{headerpink}S-Acc.$\uparrow$ & \cellcolor{headerpink}smECE$\downarrow$ & \cellcolor{headerblue}Pass@1$\uparrow$ & \cellcolor{headerpink}S-Acc.$\uparrow$ & \cellcolor{headerpink}smECE$\downarrow$ & \cellcolor{headerblue}Pass@1$\uparrow$ & \cellcolor{headerpink}S-Acc.$\uparrow$ & \cellcolor{headerpink}smECE$\downarrow$ & \cellcolor{headerblue}Pass@1$\uparrow$ & \cellcolor{headerpink}S-Acc.$\uparrow$ & \cellcolor{headerpink}smECE$\downarrow$ \\
\midrule
\multicolumn{13}{c}{\textit{Fine-tuned model: Qwen3-8B-Base}} \\
\midrule
SFT & 79.1$_{1.9}$ & 92.8$_{1.7}$ & 14.7$_{2.0}$ & 38.9$_{0.5}$ & 49.5$_{0.6}$ & 7.5$_{0.3}$ & 50.4$_{0.4}$ & 72.0$_{1.3}$ & 16.8$_{1.9}$ & 56.1 & 71.4 & 13.0 \\
RCFT & 77.2$_{0.4}$ & 93.0$_{1.6}$ & 15.6$_{1.2}$ & 38.2$_{1.7}$ & 49.8$_{1.3}$ & 6.6$_{0.8}$ & 47.6$_{1.7}$ & 76.5$_{0.2}$ & 22.1$_{0.8}$ & 54.3 & 73.1 & 14.8 \\
LS & 78.4$_{1.7}$ & 93.7$_{0.5}$ & 14.6$_{0.2}$ & 38.9$_{0.5}$ & \underline{51.3}$_{0.7}$ & \textbf{4.9}$_{1.0}$ & 45.2$_{1.2}$ & 74.3$_{1.4}$ & 22.7$_{1.7}$ & 54.2 & 73.1 & 14.1 \\
BLoB & 75.5$_{0.8}$ & \underline{94.8}$_{0.3}$ & 16.5$_{1.0}$ & 38.6$_{1.2}$ & \underline{51.3}$_{0.7}$ & \underline{5.9}$_{0.4}$ & 49.4$_{0.6}$ & \underline{77.6}$_{0.9}$ & 21.4$_{0.7}$ & 54.5 & \underline{74.6} & 14.6 \\
Dr.GRPO & \underline{85.4}$_{1.9}$ & \textbf{95.6}$_{1.5}$ & \underline{8.3}$_{1.2}$ & \underline{41.4}$_{0.8}$ & 50.9$_{0.2}$ & 10.9$_{0.8}$ & \underline{56.0}$_{1.6}$ & 75.4$_{0.4}$ & \underline{12.6}$_{0.7}$ & \underline{60.9} & 74.0 & \underline{10.6} \\
S-EDL & \textbf{88.0}$_{1.8}$ & \textbf{95.6}$_{1.0}$ & \textbf{5.7}$_{1.6}$ & \textbf{45.6}$_{0.2}$ & \textbf{54.6}$_{0.4}$ & 8.6$_{1.1}$ & \textbf{71.2}$_{2.0}$ & \textbf{79.6}$_{0.8}$ & \textbf{3.3}$_{1.0}$ & \textbf{68.3} & \textbf{76.6} & \textbf{5.9} \\
\midrule
\multicolumn{13}{c}{\textit{Fine-tuned model: Ministral3-8B-Base}} \\
\midrule
SFT & 62.9$_{0.3}$ & \textbf{96.7}$_{0.2}$ & 23.4$_{0.8}$ & 31.6$_{1.3}$ & 43.7$_{0.6}$ & \underline{6.4}$_{1.1}$ & 24.1$_{1.6}$ & 48.6$_{0.6}$ & \textbf{15.5}$_{0.8}$ & 39.5 & 63.0 & 15.1 \\
RCFT & 72.5$_{0.8}$ & 88.6$_{0.4}$ & 14.2$_{0.7}$ & 33.7$_{0.6}$ & 43.2$_{0.7}$ & 6.7$_{0.8}$ & 28.7$_{0.8}$ & 54.5$_{0.7}$ & 17.9$_{0.9}$ & 45.0 & 62.1 & \underline{12.9} \\
LS & 55.1$_{3.1}$ & 80.9$_{2.2}$ & 22.4$_{1.2}$ & 26.0$_{0.8}$ & 41.2$_{0.6}$ & \textbf{6.2}$_{1.7}$ & 24.1$_{0.5}$ & 51.4$_{0.8}$ & 20.8$_{0.6}$ & 35.1 & 57.8 & 16.5 \\
BLoB & 69.9$_{0.8}$ & 89.3$_{1.0}$ & 18.0$_{1.4}$ & 34.4$_{0.9}$ & 44.9$_{0.9}$ & 6.5$_{0.6}$ & 29.2$_{1.0}$ & 55.0$_{0.8}$ & 18.0$_{1.4}$ & 44.5 & 63.1 & 14.2 \\
Dr.GRPO & \underline{83.0}$_{0.5}$ & 86.5$_{0.4}$ & \textbf{3.0}$_{0.7}$ & \underline{39.9}$_{0.5}$ & \underline{45.2}$_{0.6}$ & 21.1$_{0.5}$ & \underline{42.5}$_{0.8}$ & \underline{58.1}$_{0.9}$ & 16.6$_{1.0}$ & \underline{55.1} & \underline{63.3} & 13.6 \\
S-EDL & \textbf{87.7}$_{1.4}$ & \underline{94.1}$_{0.6}$ & \underline{5.0}$_{0.6}$ & \textbf{41.5}$_{1.0}$ & \textbf{51.8}$_{1.8}$ & 11.0$_{0.8}$ & \textbf{55.2}$_{0.6}$ & \textbf{76.5}$_{0.7}$ & \underline{15.8}$_{1.1}$ & \textbf{61.5} & \textbf{74.1} & \textbf{10.6} \\
\midrule
\multicolumn{13}{c}{\textit{Fine-tuned model: MiMo-7B-Base}} \\
\midrule
SFT & 63.3$_{1.6}$ & 87.1$_{0.8}$ & 22.3$_{1.0}$ & 27.8$_{0.5}$ & 42.9$_{0.6}$ & \underline{4.7}$_{0.1}$ & 22.0$_{0.9}$ & 50.9$_{0.3}$ & \underline{20.9}$_{0.9}$ & 37.7 & 60.3 & 16.0 \\
RCFT & 66.4$_{0.9}$ & 87.2$_{0.1}$ & 22.1$_{0.9}$ & 28.2$_{1.8}$ & \underline{44.6}$_{1.2}$ & 5.9$_{0.4}$ & 23.4$_{0.3}$ & 49.2$_{0.3}$ & 21.3$_{0.7}$ & 39.3 & 60.3 & 16.4 \\
LS & 64.2$_{1.6}$ & 86.9$_{0.8}$ & 22.0$_{0.7}$ & 26.8$_{1.8}$ & 42.6$_{1.8}$ & 4.9$_{1.6}$ & 21.2$_{1.8}$ & 49.3$_{0.8}$ & 22.7$_{0.5}$ & 37.4 & 59.6 & 16.5 \\
BLoB & 66.8$_{0.9}$ & 90.0$_{0.9}$ & 21.6$_{0.4}$ & 29.2$_{0.9}$ & 43.2$_{0.6}$ & \textbf{4.4}$_{0.9}$ & 21.8$_{0.9}$ & 51.0$_{1.2}$ & 22.0$_{0.8}$ & 39.3 & 61.4 & 16.0 \\
Dr.GRPO & \textbf{82.4}$_{0.8}$ & \underline{91.5}$_{0.3}$ & \textbf{8.6}$_{0.2}$ & \underline{35.2}$_{1.6}$ & \underline{44.6}$_{1.2}$ & 7.0$_{0.5}$ & \underline{31.4}$_{0.3}$ & \underline{59.0}$_{0.3}$ & 21.2$_{0.4}$ & \underline{49.7} & \underline{65.0} & \underline{12.3} \\
S-EDL & \underline{81.6}$_{0.7}$ & \textbf{92.6}$_{0.8}$ & \underline{11.0}$_{0.6}$ & \textbf{35.8}$_{0.9}$ & \textbf{47.6}$_{0.7}$ & 6.6$_{0.5}$ & \textbf{41.4}$_{0.6}$ & \textbf{67.0}$_{0.9}$ & \textbf{19.1}$_{0.6}$ & \textbf{52.9} & \textbf{69.1} & \textbf{12.2} \\
\bottomrule
\end{tabular}%
}
\end{table}

\subsection{Token probability profiles of generated solutions
}
\label{sec:token_prob}

\begin{figure}[t]
  \centering
  \includegraphics[width=\linewidth]{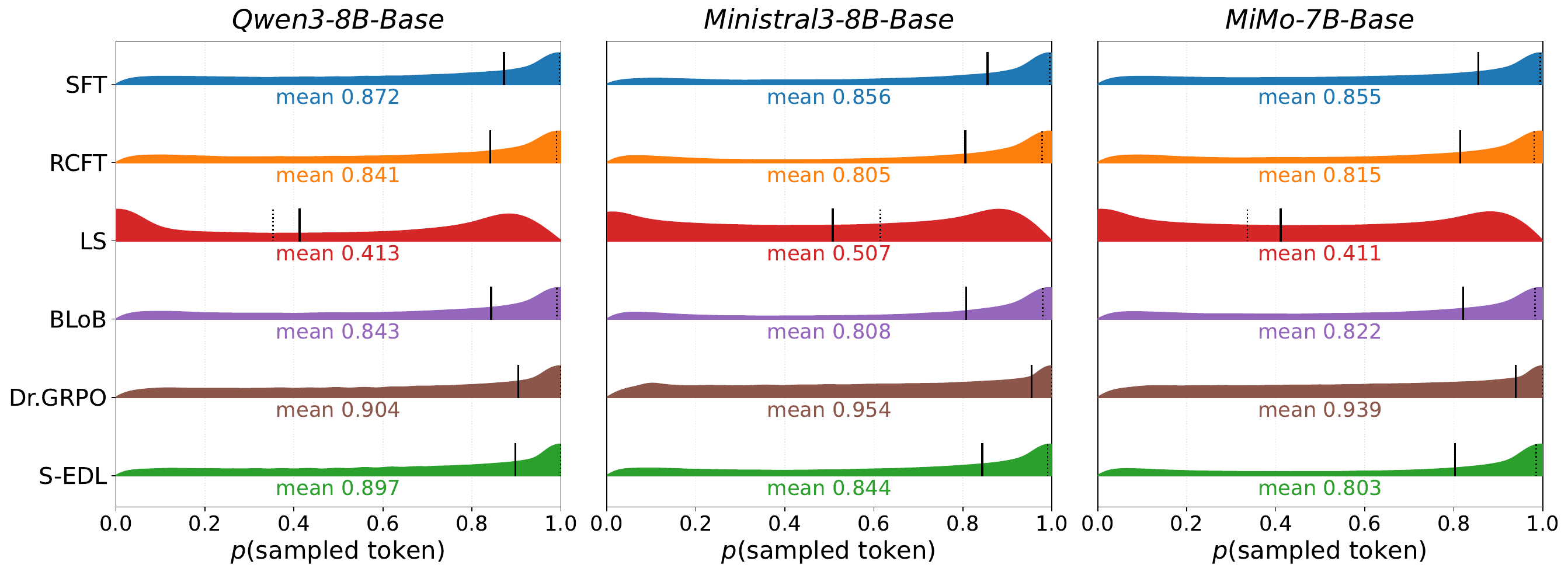}
  \caption{Violin plots of per-token probability profiles on MATH-500 generated answers. We record the probabilities of the sampled tokens. Solid and dotted black ticks mark the mean and median, respectively. Logarithm is applied to the violin widths to improve readability.}
\label{fig:token_probability}
\end{figure}

To investigate the source of calibration improvements in S-EDL, we analyze the sampled-token probability profiles of generated solutions. For each generated solution $z$ and decoded position $t$, we record $\pthetav(z_t \mid x, z_{<t})$ and pool these values across all token positions $t$ and samples $z$. 

\cref{fig:token_probability} and \cref{fig:token-prob-violin} (in \cref{sec:app:additional_per_token_prob}) illustrate three distinct mechanisms for altering token confidence on MATH-500. In \cref{fig:token_probability}, LS globally suppresses token probability, yielding the lowest mean $p(\text{sampled token})$ values of 0.413, 0.507, and 0.411 across the three models. Conversely, Dr.GRPO systematically sharpens distributions, reaching high-end means of 0.904, 0.954, and 0.939.

Unlike LS and Dr.GRPO, S-EDL preserves the SFT token probability profile. Across all twelve model-dataset pairs, the absolute difference between the S-EDL mean and the SFT mean remains below 0.06, and the sign of this minor difference varies across settings. Furthermore, S-EDL maintains a pooled token profile statistically indistinguishable from those of RCFT and BLoB, with means differing by at most 0.06, yet achieves average smECE values at least 2 points lower than all three. The calibration gain therefore cannot be explained by a coarse shift in token concentration. 
%Despite having similar pooled token probabilities, S-EDL achieves average smECE values at least 2 points lower than SFT, RCFT, and BLoB. This confirms that S-EDL's calibration gains originate from a targeted mechanism rather than a coarse shift in token concentration. 

This empirical behavior directly corroborates the semantic-level reallocation described by \cref{thm:conservation}. %S-EDL achieves semantic calibration by selectively routing probability mass across distinct semantic outcomes without imposing a fixed direction of change on local token probabilities.
S-EDL achieves semantic calibration by operating on the semantic-class marginal rather than uniformly on token distributions, routing probability mass between semantic outcomes without imposing a fixed direction of change on token probabilities. Pooled statistics cannot rule out position- or class-specific shifts, but the absence of global drift is informative: it isolates the calibration gain to a structural property of evidential loss rather than a side effect of token-level regularization.

\subsection{Ablation study}
\label{sec:ablation}

\cref{fig:ablation_beta_weight_qwen3_8b} shows how S-EDL responds to $\beta$ and $\lmseweight$. Across the four values of $\lmseweight$ spanning more than an order of magnitude, all three metrics improve over the SFT reference. As $\lmseweight$ grows, Pass@1 rises and smECE decreases mildly, at the cost of a small drop in S-Acc. The behavior of $\beta$ aligns with the gradient analysis in \cref{thm:ll_gradient}. Since the per-sample gradient of $\LMSE$ is scaled by $\beta - 1$, the extreme setting $\beta = 1.01$ attenuates the calibration signal by two orders of magnitude, and we include it as a stress test. For the remaining values $\beta \in \{1.1, 1.2, 1.5\}$, S-EDL improves over SFT on all three metrics, with $\beta = 1.2$ providing the best balance; $\beta = 1.5$ overweights the highest-likelihood samples and yields slightly worse calibration. We therefore adopt $\beta = 1.2$ and $\lmseweight = 5.0$ throughout all experiments. See \cref{sec:app:additional_ablation_beta_weight} for $\beta$ and $\lmseweight$ ablations in other settings. 

In \cref{fig:ablation_n_train}, $N_{\text{train}}$ varies from 4 to 32. Even at $N_{\text{train}} = 4$, S-EDL already improves over SFT on all three metrics. Increasing $N_{\text{train}}$ yields monotonic gains, with Pass@1 and S-Acc rising and smECE decreasing, consistent with \cref{thm:consistency}. 

\begin{figure}[t]
    \centering
    \begin{subfigure}[b]{0.32\textwidth}
        \centering
        \includegraphics[width=\linewidth]{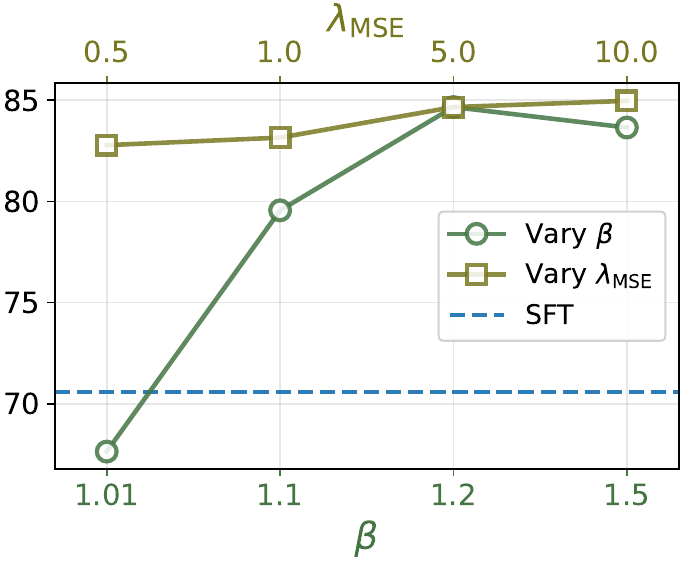}
        \caption{Pass@1 ($\uparrow$)}
        \label{fig:ablation_beta_pass1}
    \end{subfigure}
    \begin{subfigure}[b]{0.32\textwidth}
        \centering
        \includegraphics[width=\linewidth]{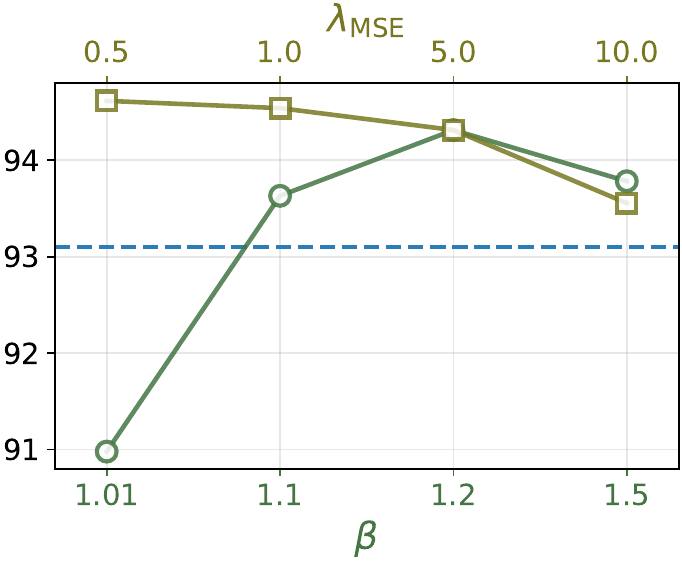}
        \caption{S-Acc. ($\uparrow$)}
        \label{fig:ablation_beta_sem_acc}
    \end{subfigure}
    \begin{subfigure}[b]{0.32\textwidth}
        \centering
        \includegraphics[width=\linewidth]{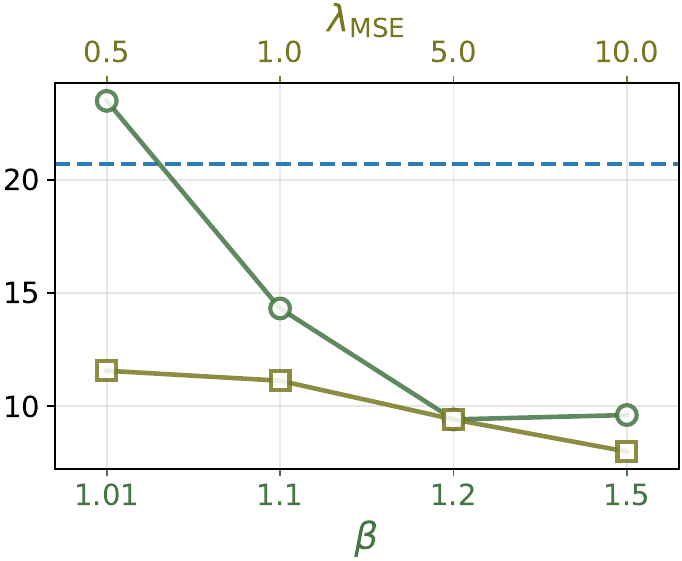}
        \caption{smECE ($\downarrow$)}
        \label{fig:ablation_beta_smece}
    \end{subfigure}
    \caption{Ablation study on $\beta$ and $\lambda_{\text{MSE}}$ with Qwen3-8B-Base trained and evaluated on GSM8K. The bottom $x$-axis varies $\beta$ with $\lambda_{\text{MSE}} = 5.0$ held fixed; the top $x$-axis varies $\lambda_{\text{MSE}}$ with $\beta = 1.2$ fixed.}
    \label{fig:ablation_beta_weight_qwen3_8b}
\end{figure}

\begin{figure}[t]
    \centering
    \begin{subfigure}[b]{0.32\textwidth}
        \centering
        \includegraphics[width=\linewidth]{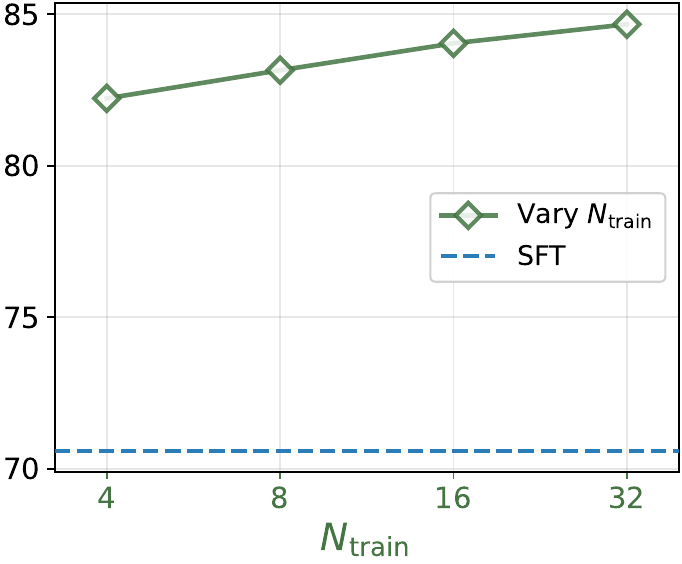}
        \caption{Pass@1 ($\uparrow$)}
        \label{fig:ablation_n_train_pass1}
    \end{subfigure}
    \begin{subfigure}[b]{0.32\textwidth}
        \centering
        \includegraphics[width=\linewidth]{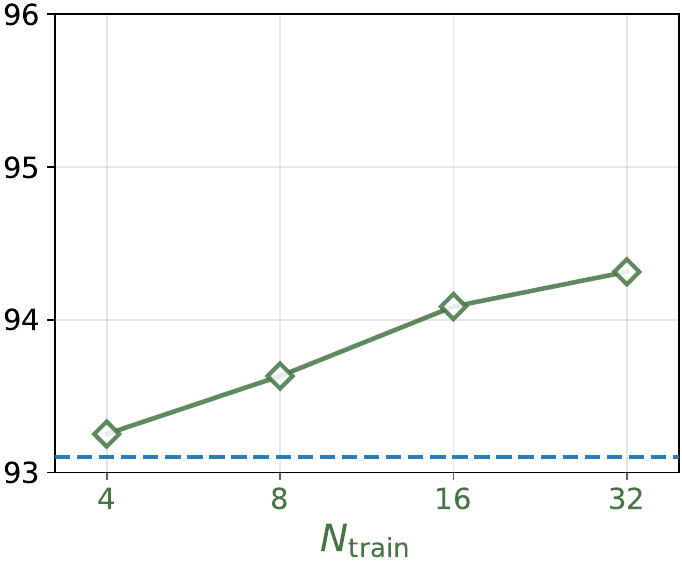}
        \caption{S-Acc. ($\uparrow$)}
        \label{fig:ablation_n_train_sem_acc}
    \end{subfigure}
    \begin{subfigure}[b]{0.32\textwidth}
        \centering
        \includegraphics[width=\linewidth]{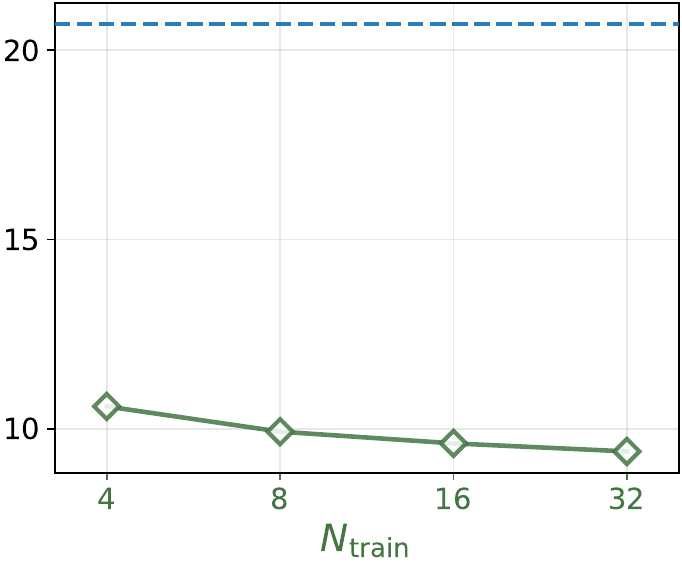}
        \caption{smECE ($\downarrow$)}
        \label{fig:ablation_n_train_smece}
    \end{subfigure}
    \caption{Ablation study on the number of Monte-Carlo samples $N_{\text{train}}$ with Qwen3-8B-Base trained and evaluated on GSM8K. The bottom $x$-axis varies $N_{\text{train}}$ with $\beta = 1.2$ and $\lambda_{\text{MSE}} = 5.0$ held fixed.}
    \label{fig:ablation_n_train}
\end{figure}

\subsection{Accelerating semantic clustering by a find-first algorithm}
The number of sampled sequences $N_{\text{train}}$ also affects training speed, since computing the EDL loss requires clustering these sequences into semantic classes.
To simplify the notation in the following analysis, we will use $N$ instead of $N_{\text{train}}$. A naive approach, which compares every pair of sequences, requires $O(N^2)$ oracle calls. By contrast, Dr.GRPO uses the verifier or LLM judge to score each sequence individually, requiring only $O(N)$ calls. 
To reduce this gap, we introduce a \emph{find-first clustering} algorithm, detailed in \cref{alg:find-first} in \cref{sec:app:find_first_algorithm}. 
Instead of exhaustive pairwise checks, the algorithm maintains a growing list of cluster representatives and compares each new sequence against them until it finds an equivalent one. If no match is found, it then creates a new cluster. 

As illustrated in \cref{fig:clustering_complexity}, clustering $N=50$ sampled sequences with the naive pairwise approach requires $N(N-1)/2 = 1225$ verifier calls per prompt. 
Across the evaluated models, find-first clustering reduces the average to approximately 65 calls on GSM8K and 165 calls on MATH, corresponding to roughly $1.3N$ and $3.3N$ calls, respectively.
This cuts the computational burden to a small fraction of the $O(N^2)$ worst case, bringing the practical cost much closer to $O(N)$. 

For general tasks that rely on an LLM judge, each sequence can instead be matched against all current cluster representatives in a single classification call.
This implementation requires $O(N)$ judge calls per question prompt, matching the $O(N)$ calls of Dr.GRPO.
Decision models such as Jev~\citep{typesafe2026docs} could be explored to further accelerate this classification step.
Across both verifier-based and judge-based settings, these implementations substantially narrows the gap in oracle calls between S-EDL and other sequence-level methods such as Dr.GRPO.

\begin{figure}[t]
    \centering
    \begin{subfigure}[b]{0.49\textwidth}
        \centering
        \includegraphics[width=\linewidth]{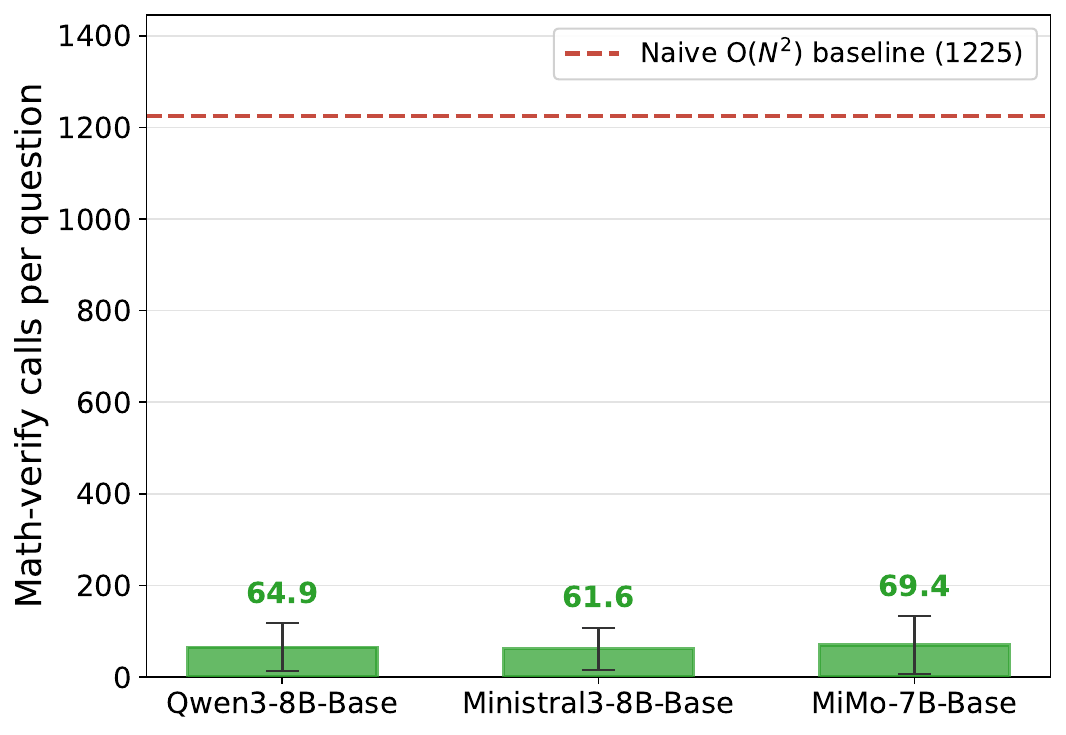}
        \caption{GSM8K}
    \end{subfigure}
    \begin{subfigure}[b]{0.49\textwidth}
        \centering
        \includegraphics[width=\linewidth]{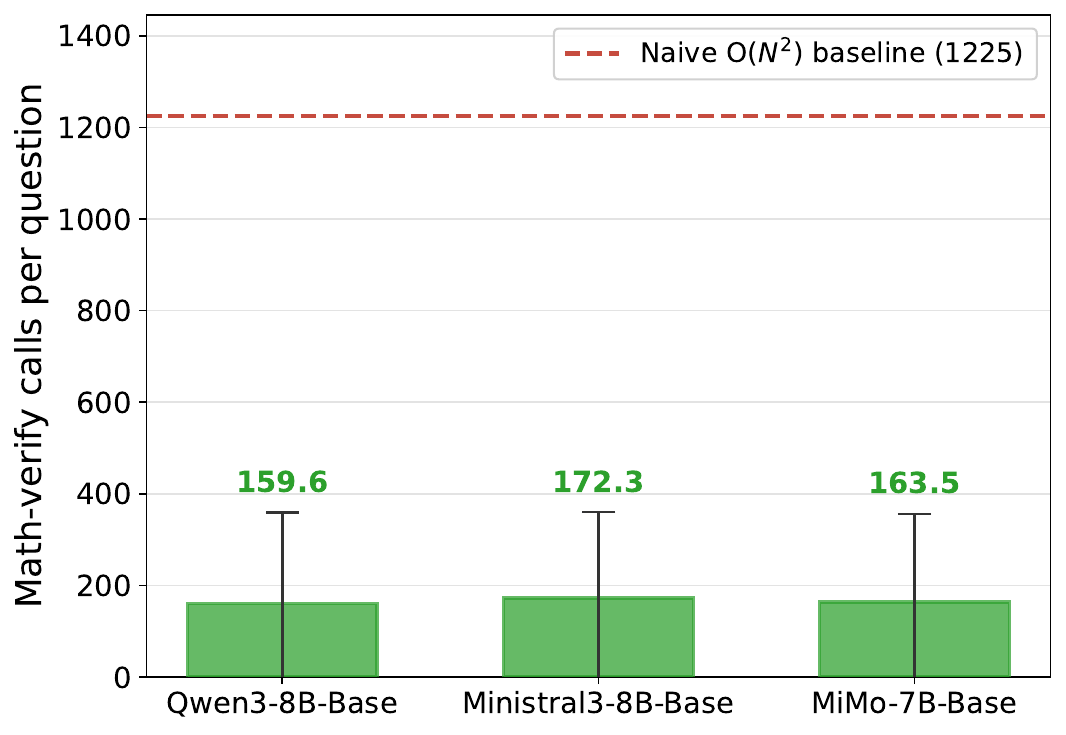}
        \caption{MATH}
    \end{subfigure}
    \caption{Empirical cost of find-first semantic clustering on math tasks. Bars show the mean number of symbolic verifier calls per question when clustering $N=50$ generated sequences, with error bars denoting one standard deviation. The dashed red line marks the naive pairwise baseline of $N(N-1)/2 = 1225$ calls per question. The find-first algorithm achieves a substantial reduction in computational cost, bringing the empirical cost much closer to $O(N)$. Specifically, it averages roughly $1.3N$ calls on GSM8K and $3.3N$ calls on MATH. The reduction is smaller on MATH, where harder problems produce more distinct semantic classes and therefore require more comparisons.}
    \label{fig:clustering_complexity}
\end{figure}

%% file: sections/6_conclusion.tex
\section{Conclusion}
\label{sec:conclusion}

We presented S-EDL, a method that constructs a Dirichlet prior over prompt-specific semantic classes from a model's own sequence likelihoods, yielding a differentiable evidential objective for fine-tuning LLMs toward semantic calibration. Our gradient analysis shows that this objective conservatively reallocates probability mass across semantic outcomes, raising the likelihoods of sequences in the ground-truth class while suppressing those of the most overconfident incorrect alternative. Empirically, this reallocation produces simultaneous gains in accuracy and calibration. More broadly, our results suggest that grounding calibration objectives in a model's own generative distribution offers a principled path toward trustworthy open-ended LLMs.

\textbf{Limitations.}\quad While our find-first algorithm mentioned above substantially accelerates semantic grouping, the associated sampling and clustering steps entail a moderate training overhead compared to standard SFT. Additionally, for general tasks that necessitate an LLM judge, this requirement imposes unavoidable memory overhead and potential noise in clustering outcomes. To streamline this process, future work could bypass external verifiers or LLMs entirely by leveraging the hidden representations of generated sequences to estimate semantic dispersion, conceptually similar to methods like EigenScore~\citep{chen2024inside}. Finally, we have not yet tested S-EDL on long-context tasks or domains where semantic equivalence is intrinsically difficult to define, such as code generation. Adapting S-EDL to these challenging scenarios remains an important direction for future exploration.

%% file: sections/7_appendix.tex
\appendix

\crefalias{section}{appendix}
\crefalias{subsection}{appendix}
\crefalias{subsubsection}{appendix}

%---SECTION---%
\section{Proofs}
\subsection{Proof of \cref{thm:consistency}} \label{sec:app:proof_consistency}

\begin{proof}[Proof of \cref{thm:consistency}]
Apply the Strong Law of Large Numbers separately to the numerator and denominator of \cref{eq:estimator}. The numerator $\tfrac{1}{N} U_s = \tfrac{1}{N} \sum_{i=1}^{N} \ui\, \mathbbm{1} \lbrace s^{(i)} = s \rbrace$ is a sample average of i.i.d.\ terms under $\zi \sim \pthetav(\cdot | x)$, so almost surely
\begin{align}
\lim_{N \to \infty} \tfrac{1}{N} U_s
  &= \E_{z \sim \pthetav(\cdot | x)} \!\left[ \pthetav(z | x)^{\beta - 1}\, \mathbbm{1} \lbrace B(x, z) = s \rbrace \right] \\
  &= \sum_{z \in \Vocab^{L}} \pthetav(z | x) \cdot \pthetav(z | x)^{\beta - 1}\, \mathbbm{1} \lbrace B(x, z) = s \rbrace \\
  &= \sum_{z\,:\,B(x,z) = s} \pthetav(z | x)^\beta.
\end{align}
An identical argument applied to $\tfrac{1}{N} \sum_{r \in \mathcal{S}} U_r$ yields the partition function $\sum_{z' \in \Vocab^{L}} \pthetav(z' | x)^\beta$. The continuous mapping theorem then identifies the ratio of the two limits with $\pi_\beta(s)$ in \cref{eq:tempered}.
\end{proof}

\subsection{Proof of \cref{thm:ll_gradient}} 
\label{sec:app:proof_ll_gradient}

\begin{proof}[Proof of \cref{thm:ll_gradient}]
The forward pass composes four maps:
\[
    \elli \,\xrightarrow{\exp((\beta - 1)\,\cdot)}\, \ui \,\xrightarrow{\text{class sum}}\, U_s \,\xrightarrow{\text{normalize}}\, \hat\pi_\beta(s) \,\xrightarrow{\text{EDL loss}}\, \LMSE.
\]
We establish $d_s = \partial \LMSE / \partial \hat{\pi}_\beta(s)$ first, then backpropagate along the remaining three links.

\emph{Step 1: derivative w.r.t. $\hat\pi_\beta(s)$.}
Since $\alpha_s = N \hat\pi_\beta(s) + 1$, we have $\partial \alpha_s / \partial \hat\pi_\beta(s) = N$, and the Dirichlet strength $\alz = N + |\mathcal{S}|$ is constant in $\hat\pi_\beta$. 
Let $q_s := \mathbbm{1} \lbrace s = s^* \rbrace$ be the binary indicator of the ground truth class.
Differentiating the closed-form MSE in \cref{eq:mse} gives
\begin{equation}
\begin{aligned}
    \frac{\partial \LMSE}{\partial \hat\pi_\beta(s)}
    &= \underbrace{\frac{\partial \alpha_s}{\partial \hat\pi_\beta(s)}}_{= N} \cdot \frac{\partial \LMSE}{\partial \alpha_s} \\
    &= N \left[ 2 \left( \frac{\alpha_s}{\alz} - q_s \right) \frac{1}{\alz} + \frac{\alz - 2\alpha_s}{\alz^2 (\alz + 1)} \right] \\
    &= \frac{N}{\alz^2 (\alz + 1)} \left[ 2(\alpha_s - q_s \alz)(\alz + 1) + \alz - 2\alpha_s \right] \\
    &= \frac{N}{\alz (\alz + 1)} \left[ 2\alpha_s - 2 q_s (\alz + 1) + 1 \right].
\end{aligned}
\end{equation}
Substituting $\alz = N + |\mathcal{S}|$ yields 

\begin{equation}\label{eq:ds}
    d_s = \frac{N \left[\, 2\alpha_s - 2(N + |\mathcal{S}| + 1)\, q_s + 1 \,\right]}{(N + |\mathcal{S}|)(N + |\mathcal{S}| + 1)},
\end{equation}

\emph{Step 2: backpropagation from $\hat\pi_\beta$ to $\elli$.}
Let $U := \sum_{s \in \mathcal{S}} U_s = \sum_{i=1}^{N} \ui$. Then:
\begin{enumerate}[leftmargin=*]
\item $\displaystyle \frac{\partial \hat\pi_\beta(s)}{\partial U_r} = \frac{1}{U} \left( \delta_{sr} - \hat\pi_\beta(s) \right)$, by direct differentiation of $U_s / U$.

\item $\displaystyle \frac{\partial \LMSE}{\partial U_r} = \sum_{s \in \mathcal{S}} d_s \cdot \frac{1}{U} \left( \delta_{sr} - \hat\pi_\beta(s) \right) = \frac{1}{U} \left( d_r - \bar d \right)$.

\item $\partial U_s / \partial \ui = \mathbbm{1} \lbrace s^{(i)} = s \rbrace$, hence $\partial \LMSE / \partial \ui = (1/U) \left( d_{s^{(i)}} - \bar d \right)$.

\item $\partial \ui / \partial \elli = (\beta - 1)\, \ui$, so
\begin{equation*}
    \frac{\partial \LMSE}{\partial \elli} = \frac{(\beta - 1)\, \ui}{U} \left( d_{s^{(i)}} - \bar d \right) = (\beta - 1)\, w^{(i)} \left( d_{s^{(i)}} - \bar d \right),
\end{equation*}
\end{enumerate}
which is \cref{eq:dl_dell}.
\end{proof}

\subsection{Proof of \cref{thm:sign_of_grad}}
\label{sec:app:proof_sign_of_grad}

\begin{proof}[Proof of \cref{thm:sign_of_grad}]
Substituting $\alpha_s = N\hat\pi_\beta(s) + 1$ into \cref{eq:ds} reveals that the class-level derivatives $d_s$ possess opposite signs for the ground-truth class and the incorrect classes. Specifically, for the ground-truth class $s^*$, we have
\begin{equation*}
    d_{s^*} \propto 2N(\hat\pi_\beta(s^*) - 1) - 2|\mathcal{S}| + 1 < 0,
\end{equation*}
and for any incorrect class $s \neq s^*$, we have
\begin{equation*}
    d_s \propto 2N\hat\pi_\beta(s) + 3 > 0.
\end{equation*}
Consequently, $d_{s^*} < d_s$ holds for every $s \neq s^*$. Furthermore, since $d_s$ strictly increases with respect to $\hat\pi_\beta(s)$ among the incorrect classes, the most overconfident incorrect class defined by $s^- := \arg\max_{s \neq s^*} \hat\pi_\beta(s)$
attains the maximum value of $d_s$ among the incorrect classes, and therefore among all classes.
From \cref{thm:ll_gradient}, the gradient with respect to the sample log-likelihood is given by
\begin{equation*}
    \frac{\partial \LMSE}{\partial \ell^{(i)}} = (\beta - 1)w^{(i)}(d_{s^{(i)}} - \bar d), \qquad \text{where } \bar d = \sum_s \hat\pi_\beta(s)d_s.
\end{equation*}
Given that $\beta > 1$ and the normalized sample weight $w^{(i)} > 0$, the sign of the gradient is entirely determined by the difference $d_{s^{(i)}} - \bar d$.
Because $\bar d$ is a convex combination of all class-level derivatives and $d_{s^*}$ represents the global minimum among them, it follows naturally that $d_{s^*} - \bar d \leq 0$.
Therefore, for any sample belonging to the ground-truth class ($s^{(i)} = s^*$), the gradient satisfies $\frac{\partial \LMSE}{\partial \ell^{(i)}} \leq 0.$
This inequality is strict provided the semantic distribution has not fully collapsed to the ground-truth class ($\hat\pi_\beta(s^*) < 1$), because the convex combination $\bar d$ will include positive weights from the strictly larger derivatives of the incorrect classes.

Similarly, because $d_{s^-}$ is the global maximum among the class-level derivatives, we have $d_{s^-} - \bar d \geq 0$.
Thus, for any sample mapped to the most overconfident incorrect class ($s^{(i)} = s^-$), the gradient satisfies $\frac{\partial \LMSE}{\partial \ell^{(i)}} \geq 0$.
This inequality becomes strict whenever $\hat\pi_\beta(s^*) > 0$, as the convex combination $\bar d$ then incorporates a positive weight from the strictly smaller derivative $d_{s^*} < d_{s^-}$. This concludes the proof.
\end{proof}

\subsection{Proof of \cref{thm:conservation}}
\label{sec:app:proof_conservation}

\begin{proof}[Proof of \cref{thm:conservation}]
Using Theorem~1 and $\sum_{j:s^{(j)}=s} w^{(j)}=\hat\pi_\beta(s)$, we have
\[
\sum_{i=1}^{N} \frac{\partial \LMSE}{\partial \elli}
=
(\beta-1)\sum_{i=1}^{N} w^{(i)}(d_{s^{(i)}}-\bar d)
=
(\beta-1)\left(\sum_s \hat\pi_\beta(s)d_s-\bar d\sum_{i=1}^{N}w^{(i)}\right)
=0.
\]
The last equality follows from $\bar d=\sum_s \hat\pi_\beta(s)d_s$ and $\sum_{i=1}^{N}w^{(i)}=1$.
\end{proof}

\subsection{Relation to GRPO-like updates}
\label{sec:app:grpo_connection}

In this subsection, we show that S-EDL gradients can also be interpreted as GRPO-like updates.
Before detailing this connection, we note that the SFT term in $\LSEDL$ is not essential to the formulation of S-EDL.
In principle, the model can be trained using $\LMSE$ alone with $\beta>1$.
Its practical role is to provide dense token-level supervision early in training, when correct completions are rare and the positive signal from sampled semantic classes can be weak, as in sparse-reward learning.
Therefore, the following analysis omits the SFT term and focuses on $\LMSE$.

For a fixed set of sampled sequences and semantic assignments, \cref{eq:dl_dell} gives S-EDL a group-relative update structure similar to GRPO~\citep{shao2024deepseekmath}.
The key idea is to interpret the \emph{negative} gradient $-\partial\LMSE/\partial\elli$ as a reward signal for each sampled sequence.
Define the uncentered class reward $r_s := -d_s$ and its weighted mean $\bar r := \sum_{s\in\mathcal{S}}\hat\pi_\beta(s)r_s = -\bar d$.
The negative gradient with respect to each sample log-likelihood then acts as an advantage coefficient:
\begin{equation}\label{eq:sedl_advantage}
    A^{(i)} := -\frac{\partial \LMSE}{\partial \elli}
    = (\beta - 1)w^{(i)}\bigl(r_{s^{(i)}}-\bar r\bigr),
\end{equation}
and then we have
\begin{equation}
    -\nabla_{\thetav}\LMSE = \sum_{i=1}^{N} A^{(i)}\nabla_{\thetav}\elli.
\end{equation}
Here, the group comparison is over the semantic classes associated with the same prompt.
The class derivative $d_{s^{(i)}}$ is centered by $\bar d$, and the centered signal is modulated by $(\beta-1)w^{(i)}$, with the sign reversed for likelihood ascent.
Original GRPO centers rewards over sampled completions using their arithmetic mean and divides by a reward standard deviation shared by the group.
S-EDL instead uses a semantic-probability-weighted mean and a sample-dependent modulation factor, so completions in the same class share the class reward but can receive different update weights.

This interpretation allows S-EDL to be incorporated into proximal policy optimization (PPO) implementations~\citep{schulman2017proximal}, with clipping and micro-batch updates as in GRPO variants.
This can improve the sample efficiency and stability of S-EDL training.

%---END SECTION---%

% %---SECTION---%
% \section{Case studies on MATH}
% \label{app:case-studies}

% We illustrate the qualitative effect of the EDL objective with two
% representative cases drawn from the MATH-500 evaluation. For each case we
% compare the baseline (3-epoch SFT) against EDL fine-tuned from the
% 1-epoch SFT checkpoint with $w{=}5.0$, $\beta{=}1.2$. Both models are
% sampled 50 times at $T{=}0.7$ and grouped into semantic clusters by
% \textsc{math\_verify}+judge equivalence; cluster sizes and the modal cluster
% ($\bigstar$) summarise the predictive distribution.
% %---END SECTION---%

%---SECTION---%
\section{Implementation details}\label{sec:app:implementation_details}

\subsection{Construction of semantic class set and handling unobserved ground-truth classes}
\label{sec:unobserved_gt}
Previous work on semantic calibration \citep{nakkiran2026trained} establishes evaluation metrics based entirely on the sampled distribution. When adapting this purely sample-based perspective into a training objective, if the correct semantic class is absent from the generated samples, the evidential loss $\LMSE$ can still be computed, as the label vector $\qv$ then simply consists of all zero entries. Consequently, the gradient only suppresses the incorrect overconfident completions, but lacks the effect of increasing the likelihoods of the correct completions.

To remedy this and ensure the gradient direction toward the ground-truth class established in \cref{thm:sign_of_grad} remains available, S-EDL \emph{conditionally} introduces $s^*$ into the semantic class set $\mathcal{S}_x$ strictly when it is absent from the generated samples. This guarantees that we do not artificially inflate the sample count or alter the distribution when the model naturally generates the correct answer. We apply this conditional handling as follows to maximize consistency across the training and evaluation pipelines.

\begin{itemize}[leftmargin=1.5em]
\item When the ground-truth class is already present in the sampled generations, we use the sampled set directly without any modifications during both training and evaluation.
\item When the ground-truth class is unobserved during training, we explicitly include it by feeding the ground-truth sequence $y$ to the model to compute its log-likelihood and applying a penalty of $-4.6$. This penalty scales down its probability mass to reflect that the sequence was not naturally sampled, while still preserving the gradient path for optimization.
\item When the ground-truth class is unobserved during evaluation, we similarly include it but strictly set its sampled count to $0$, ensuring the evaluation accurately reflects the true generative distribution of the model.
\end{itemize}

Deterministically adding the ground-truth completion $y$ means the estimator is no longer a pure Monte Carlo estimator based solely on samples from $\pthetav$. Instead, it creates a hybrid objective that mixes the generative distribution of the model with a supervised finite-candidate signal. We accept this theoretical trade-off to maintain a robust and differentiable training path.

\subsection{Find-first algorithm for semantic clustering}\label{sec:app:find_first_algorithm}

\begin{algorithm}[t]
\caption{Find-first semantic clustering}
\label{alg:find-first}
\begin{algorithmic}[1]
\Require generated sequences $z^{(1)},\dots,z^{(N)}$; equivalence oracle $\textsc{Match}$;
         string similarity function $\sigma(\cdot,\cdot)$
\Ensure clusters $\mathcal{C}$, each a pair $(r,M)$ of representative answer and member set
\State $\mathcal{C} \gets \langle \ \rangle$
\For{$i = 1, \dots, N$}
    \State $K \gets |\mathcal{C}|$
    \State $\pi \gets \arg\!\operatorname{sort}_{k\in\{1,\dots,K\}} \big(-\sigma(z^{(i)}, r_k)\big)$
        \Comment{order reps by descending similarity to $z^{(i)}$}
    \State $j \gets \textsc{Match} \big(z^{(i)},\; [r_{\pi(1)}, \dots, r_{\pi(K)}]\big)$
        \Comment{position in list, or $\bot$ if none equivalent}
    \If{$j \neq \bot$}
        \State $M_{\pi(j)} \gets M_{\pi(j)} \cup \{z^{(i)}\}$
            \Comment{assign $z^{(i)}$ to the matched cluster}
    \Else
        \State append $(r, M) \gets (z^{(i)}, \{z^{(i)}\})$ to $\mathcal{C}$
            \Comment{open a new cluster}
    \EndIf
\EndFor
\State \Return $\mathcal{C}$
\end{algorithmic}
\end{algorithm}

Sequence-level training methods like Dr.GRPO require comparing $N$ sampled sequences against the ground truth answer, resulting in an $O(N)$ computational complexity per prompt. By contrast, computing the evidential loss in S-EDL requires grouping the generated sequences into semantic classes. A naive pairwise comparison between all sequences scales as $O(N^2)$, imposing a substantial computational bottleneck.

To address this scaling gap, we introduce a find-first semantic clustering algorithm as shown in \cref{alg:find-first}. Instead of brute-force pairwise comparison, the algorithm maintains a growing list of cluster representatives and dynamically evaluates each new sequence against them. To optimize the comparison order, we sort the existing representatives by descending string similarity to the new sequence using \texttt{rapidfuzz}~\citep{bachmann2025rapidfuzz}. The equivalence oracle $\textsc{Match}$ is then instantiated in two ways depending on the evaluation domain.

\textbf{Mathematical tasks}\qquad $\textsc{Match}(z, [r_1, \dots, r_K])$ evaluates equivalence using a symbolic verifier (e.g. \texttt{math-verify}\citep{kydlicek2025mathverify}). It sequentially checks the candidate against the sorted representatives and returns the first index $j$ where $z \equiv r_j$, returning $\bot$ if no match exists. The string similarity ordering heavily biases the search toward early termination. 

\textbf{General tasks}\qquad For general subject tasks that rely on an LLM as a judge, $\textsc{Match}$ replaces iterative pairwise checks with a single listwise classification prompt. The judge evaluates the sequence against all current cluster representatives in one forward pass, directly returning the matched index in $\{1, \dots, K\}$ or $\bot$. The system is shown in \cref{sec:app:prompt_llm_as_judge}. In this context, the string similarity ordering ensures the most plausible candidate appears first in the prompt context. By grouping the comparisons into a single listwise operation, the temporal complexity of semantic clustering is reduced to exactly $O(N)$, identically matching the scaling of Dr.GRPO.

\subsection{Dataset construction}
\label{sec:app:dataset_construction}

\paragraph{GSM8K} We use the original training and test splits from the official release, along with the provided solutions and chain-of-thought reasoning steps.

\paragraph{MATH} We utilize the official training split and the associated chain-of-thought solutions. For evaluation, we employ the widely adopted MATH-500 test split.

\paragraph{ChemistryQA} We use the official training and test splits. To construct the training data, we prompt the GPT-OSS-120B model \citep{agarwal2025gpt} to generate chain-of-thought solutions, retaining only those that yield the correct final answer. This filtering process results in 3062 training samples. For evaluation, we use the complete test set and assess performance directly against the provided ground-truth answers.

\paragraph{FormulaReasoning} We similarly use the official training and test splits. We generate chain-of-thought solutions using the GPT-OSS-120B model and filter for correctness, which yields 4096 training samples. We evaluate on the complete test set using the official ground-truth answers. During the cross-domain evaluation, we observed that models fine-tuned on the MATH dataset experience a severe accuracy degradation for all studied fine-tuning methods. This drop makes calibration performance indistinguishable across the fine-tuning methods, as most samples exhibit uniformly low semantic confidence and low accuracy across all methods. We identified that the models struggle to recall the necessary physical laws. Consequently, we manually augment the testing prompts with the relevant physical laws as hints. These hint-augmented prompts are applied exclusively during the cross-domain evaluation. Furthermore, these hint-augmented prompts are used across evaluation of all fine-tuning methods to ensure fairness.

Examples of the training prompts are provided in \cref{sec:app:examples_training_prompts}.

\begin{table}[h!]
\centering
\caption{EDL fine-tuning hyperparameters for Qwen3-8B-Base across the four evaluation datasets.}
\label{tab:hparams_qwen3_8b}
\small
\resizebox{\textwidth}{!}{%
\begin{tabular}{lcccc}
\toprule
Hyperparameter & GSM8K & MATH & ChemistryQA & FormulaReasoning \\
\midrule
\multicolumn{5}{l}{\textit{Model \& LoRA}} \\
Model \texttt{max\_seq\_length}  & 2048 & 4096 & 4096 & 4096 \\
LoRA $r$ / $\alpha$ / dropout    & 16 / 32 / 0.0 & 16 / 32 / 0.0 & 16 / 32 / 0.0 & 16 / 32 / 0.0 \\
LoRA target modules              & Q,K,V,O,gate,up,down & Q,K,V,O,gate,up,down & Q,K,V,O,gate,up,down & Q,K,V,O,gate,up,down \\
LoRA bias                        & none & none & none & none \\
Precision                        & bf16 & bf16 & bf16 & bf16 \\
\midrule
\multicolumn{5}{l}{\textit{Optimizer (cosine schedule)}} \\
Optimizer                        & AdamW & AdamW & AdamW & AdamW \\
LR scheduler                     & cosine & cosine & cosine & cosine \\
Warmup ratio                     & 0.06 & 0.06 & 0.06 & 0.06 \\
Max grad norm                    & 1.0 & 1.0 & 1.0 & 1.0 \\
Gradient checkpointing           & enabled & enabled & enabled & enabled \\
\midrule
\multicolumn{5}{l}{\textit{EDL training}} \\
EDL initialization               & base & base & base & base \\
\texttt{num\_train\_epochs}      & 1 & 1 & 1 & 1 \\
Per-device batch size            & 4 & 4 & 4 & 4 \\
Peak LR                          & $2\!\times\!10^{-4}$ & $2\!\times\!10^{-4}$ & $2\!\times\!10^{-4}$ & $2\!\times\!10^{-4}$ \\
\texttt{edl\_weight\_start\_step} & 0.5 & 0.5 & 0.5 & 0.5 \\
EDL $\beta$                      & 1.2 & 1.2 & 1.2 & 1.2 \\
EDL weight $\lmseweight$         & 5.0 & 5.0 & 5.0 & 5.0 \\
$N_{\text{train}}$               & 32 & 8 & 8 & 8 \\
Sampling \texttt{max\_tokens}    & 256 & 768 & 768 & 768 \\
Sampling temperature             & 1.0 & 1.0 & 1.0 & 1.0 \\
Sampling \texttt{top\_p}         & 0.9 & 0.9 & 0.9 & 0.9 \\
\midrule
\multicolumn{5}{l}{\textit{Evaluation}} \\
$N_{\text{eval}}$                & 50 & 50 & 50 & 50 \\
Temperature                      & 0.8 & 0.8 & 0.8 & 0.8 \\
\texttt{top\_p}                  & 0.9 & 0.9 & 0.9 & 0.9 \\
\texttt{max\_tokens}             & 1024 & 2048 & 2048 & 2048 \\
\midrule
\multicolumn{5}{p{1.3\linewidth}}{\footnotesize\emph{Note.} All EDL runs use a cosine LR schedule with \texttt{warmup\_ratio}~$=0.06$ over the entire \texttt{num\_train\_epochs}; the EDL loss is enabled starting at fraction \texttt{edl\_weight\_start\_step} of total steps. Where \emph{EDL initialization} is \emph{base}, the cosine schedule covers the full run and EDL begins partway through the decay; where \emph{EDL initialization} is \emph{SFT ckpt (epoch=1)}, the run is short and the cosine schedule covers only the EDL phase.} \\
\bottomrule
\end{tabular}%
}
\end{table}

\begin{table}[h!]
\centering
\caption{EDL fine-tuning hyperparameters for Ministral-3-8B-Base across the four evaluation datasets.}
\label{tab:hparams_ministral3_8b}
\small
\resizebox{\textwidth}{!}{%
\begin{tabular}{lcccc}
\toprule
Hyperparameter & GSM8K & MATH & ChemistryQA & FormulaReasoning \\
\midrule
\multicolumn{5}{l}{\textit{Model \& LoRA}} \\
Model \texttt{max\_seq\_length}  & 2048 & 4096 & 4096 & 4096 \\
LoRA $r$ / $\alpha$ / dropout    & 16 / 32 / 0.0 & 16 / 32 / 0.0 & 16 / 32 / 0.0 & 16 / 32 / 0.0 \\
LoRA target modules              & Q,K,V,O,gate,up,down & Q,K,V,O,gate,up,down & Q,K,V,O,gate,up,down & Q,K,V,O,gate,up,down \\
LoRA bias                        & none & none & none & none \\
Precision                        & bf16 & bf16 & bf16 & bf16 \\
\midrule
\multicolumn{5}{l}{\textit{Optimizer (cosine schedule)}} \\
Optimizer                        & AdamW & AdamW & AdamW & AdamW \\
LR scheduler                     & cosine & cosine & cosine & cosine \\
Warmup ratio                     & 0.06 & 0.06 & 0.06 & 0.06 \\
Max grad norm                    & 1.0 & 1.0 & 1.0 & 1.0 \\
Gradient checkpointing           & enabled & enabled & enabled & enabled \\
\midrule
\multicolumn{5}{l}{\textit{EDL training}} \\
EDL initialization               & base & SFT ckpt (epoch=1) & SFT ckpt (epoch=1) & SFT ckpt (epoch=1) \\
\texttt{num\_train\_epochs}      & 2.5 & 0.3 & 0.3 & 0.5 \\
Per-device batch size            & 4 & 4 & 4 & 4 \\
Peak LR                          & $2\!\times\!10^{-4}$ & $5\!\times\!10^{-5}$ & $5\!\times\!10^{-5}$ & $5\!\times\!10^{-5}$ \\
\texttt{edl\_weight\_start\_step} & 0.8 & 0.0 & 0.0 & 0.0 \\
EDL $\beta$                      & 1.2 & 1.2 & 1.2 & 1.2 \\
EDL weight $\lmseweight$         & 5.0 & 5.0 & 5.0 & 5.0 \\
$N_{\text{train}}$               & 32 & 12 & 12 & 8 \\
Sampling \texttt{max\_tokens}    & 256 & 512 & 512 & 512 \\
Sampling temperature             & 1.0 & 0.5 & 0.5 & 1.0 \\
Sampling \texttt{top\_p}         & 0.9 & 0.9 & 0.9 & 0.9 \\
\midrule
\multicolumn{5}{l}{\textit{Evaluation}} \\
$N_{\text{eval}}$                & 50 & 50 & 50 & 50 \\
Temperature                      & 0.6 & 0.6 & 0.6 & 0.6 \\
\texttt{top\_p}                  & 0.9 & 0.9 & 0.9 & 0.9 \\
\texttt{max\_tokens}             & 1024 & 2048 & 2048 & 2048 \\
\midrule
\multicolumn{5}{p{1.3\linewidth}}{\footnotesize\emph{Note.} All EDL runs use a cosine LR schedule with \texttt{warmup\_ratio}~$=0.06$ over the entire \texttt{num\_train\_epochs}; the EDL loss is enabled starting at fraction \texttt{edl\_weight\_start\_step} of total steps. Where \emph{EDL initialization} is \emph{base}, the cosine schedule covers the full run and EDL begins partway through the decay; where \emph{EDL initialization} is \emph{SFT ckpt (epoch=1)}, the run is short and the cosine schedule covers only the EDL phase.} \\
\bottomrule
\end{tabular}%
}
\end{table}

\begin{table}[h!]
\centering
\caption{EDL fine-tuning hyperparameters for MiMo-7B-Base across the four evaluation datasets.}
\label{tab:hparams_mimo_7b}
\small
\resizebox{\textwidth}{!}{%
\begin{tabular}{lcccc}
\toprule
Hyperparameter & GSM8K & MATH & ChemistryQA & FormulaReasoning \\
\midrule
\multicolumn{5}{l}{\textit{Model \& LoRA}} \\
Model \texttt{max\_seq\_length}  & 2048 & 4096 & 4096 & 4096 \\
LoRA $r$ / $\alpha$ / dropout    & 16 / 32 / 0.0 & 16 / 32 / 0.0 & 16 / 32 / 0.0 & 16 / 32 / 0.0 \\
LoRA target modules              & Q,K,V,O,gate,up,down & Q,K,V,O,gate,up,down & Q,K,V,O,gate,up,down & Q,K,V,O,gate,up,down \\
LoRA bias                        & none & none & none & none \\
Precision                        & bf16 & bf16 & bf16 & bf16 \\
\midrule
\multicolumn{5}{l}{\textit{Optimizer (cosine schedule)}} \\
Optimizer                        & AdamW & AdamW & AdamW & AdamW \\
LR scheduler                     & cosine & cosine & cosine & cosine \\
Warmup ratio                     & 0.06 & 0.06 & 0.06 & 0.06 \\
Max grad norm                    & 1.0 & 1.0 & 1.0 & 1.0 \\
Gradient checkpointing           & enabled & enabled & enabled & enabled \\
\midrule
\multicolumn{5}{l}{\textit{EDL training}} \\
EDL initialization               & base & SFT ckpt (epoch=1) & SFT ckpt (epoch=1) & SFT ckpt (epoch=1) \\
\texttt{num\_train\_epochs}      & 2.5 & 0.3 & 0.3 & 0.5 \\
Per-device batch size            & 4 & 4 & 4 & 4 \\
Peak LR                          & $2\!\times\!10^{-4}$ & $5\!\times\!10^{-5}$ & $5\!\times\!10^{-5}$ & $5\!\times\!10^{-5}$ \\
\texttt{edl\_weight\_start\_step} & 0.8 & 0.0 & 0.0 & 0.0 \\
EDL $\beta$                      & 1.2 & 1.2 & 1.2 & 1.2 \\
EDL weight $\lmseweight$         & 5.0 & 5.0 & 5.0 & 5.0 \\
$N_{\text{train}}$               & 32 & 12 & 12 & 12 \\
Sampling \texttt{max\_tokens}    & 256 & 512 & 512 & 512 \\
Sampling temperature             & 1.0 & 0.5 & 0.5 & 1.0 \\
Sampling \texttt{top\_p}         & 0.9 & 0.9 & 0.9 & 0.9 \\
\midrule
\multicolumn{5}{l}{\textit{Evaluation}} \\
$N_{\text{eval}}$                & 50 & 50 & 50 & 50 \\
Temperature                      & 0.7 & 0.7 & 0.7 & 0.7 \\
\texttt{top\_p}                  & 0.9 & 0.9 & 0.9 & 0.9 \\
\texttt{max\_tokens}             & 1024 & 2048 & 2048 & 2048 \\
\midrule
\multicolumn{5}{p{1.3\linewidth}}{\footnotesize\emph{Note.} All EDL runs use a cosine LR schedule with \texttt{warmup\_ratio}~$=0.06$ over the entire \texttt{num\_train\_epochs}; the EDL loss is enabled starting at fraction \texttt{edl\_weight\_start\_step} of total steps. Where \emph{EDL initialization} is \emph{base}, the cosine schedule covers the full run and EDL begins partway through the decay; where \emph{EDL initialization} is \emph{SFT ckpt (epoch=1)}, the run is short and the cosine schedule covers only the EDL phase.} \\
\bottomrule
\end{tabular}%
}
\end{table}

\subsection{Training hyperparameters}
\label{sec:app:training_hyperparameters}

The full training and evaluation hyperparameters for each (model, dataset) cell are listed in \cref{tab:hparams_qwen3_8b}, \cref{tab:hparams_ministral3_8b}, and \cref{tab:hparams_mimo_7b}, covering Qwen3-8B-Base, Ministral-3-8B-Base, and MiMo-7B-Base respectively.
For MATH and GSM8K we use \texttt{math-verify}~\citep{kydlicek2025mathverify} for semantic clustering at both training and evaluation time. On ChemistryQA and FormulaReasoning, we employ GPT-OSS-20B~\citep{agarwal2025gpt} as an LLM-as-judge for semantic clustering at both training and evaluation time, with the system prompt shown in \cref{sec:app:prompt_llm_as_judge}.

\subsection{Hyperparameters of baselines}
\label{sec:app:baseline_hyperparameters}

\paragraph{BLoB} During training, we set \texttt{bayes\_samples} to 1, \texttt{bayes\_epsilon} to 0.05, \texttt{bayes\_beta} to 0.2, and \texttt{bayes\_kllr} to 0.1, with \texttt{bayes\_kl\_reweighting} enabled. During evaluation, we use 5 Bayesian samples. We refer readers to the official BLoB codebase for the precise definitions of these hyperparameters.

\paragraph{RCFT} During training, we configure the hyperparameters by setting \texttt{rcft\_alpha} to 0.5, \texttt{rcft\_calibration\_scale} to 0.001, \texttt{rcft\_num\_bins} to 10, and \texttt{rcft\_eps} to 0.000001. Please refer to the official RCFT codebase for detailed explanations of these variables.

\paragraph{LS} During training, we apply a label smoothing coefficient of $0.1$.

\paragraph{Dr.GRPO} We start the fine tuning process from the supervised checkpoint at epoch 1. To ensure a fair comparison, we set the total number of training steps for Dr.GRPO to exactly match the number of steps where the evidential loss is active during S-EDL training. We use a group size of 8. Following \citet{liu2025understanding}, we do not use the KL divergence penalty w.r.t. the reference model.

%---END SECTION---%

%---SECTION---%
\section{Extended experimental results}

\subsection{In-distribution evaluations across varying temperatures}
\label{sec:app:eval_across_temperatures}
We evaluate the fine-tuned models across a range of sampling temperatures from 0.4 to 1.0 to test the effectiveness of calibration performance at different temperatures. The experimental settings are identical to those detailed in \cref{sec:in_distribution_evaluation}, except for the varied temperature values, and we report the average performance metrics for the three models in \cref{fig:eval_across_temperatures}. S-EDL and Dr.GRPO maintain competitive semantic accuracy across the temperature spectrum. Furthermore, S-EDL generally achieves low calibration error across the evaluated temperatures. In addition, this experiment also indicates that temperature scaling techniques~\citep{lamb2026improving,shen2024thermometer} can be used together with our method to further boost the calibration performance.

\begin{figure}[th]
    \centering
    \begin{subfigure}[b]{\textwidth}
        \centering
        \includegraphics[width=\linewidth]{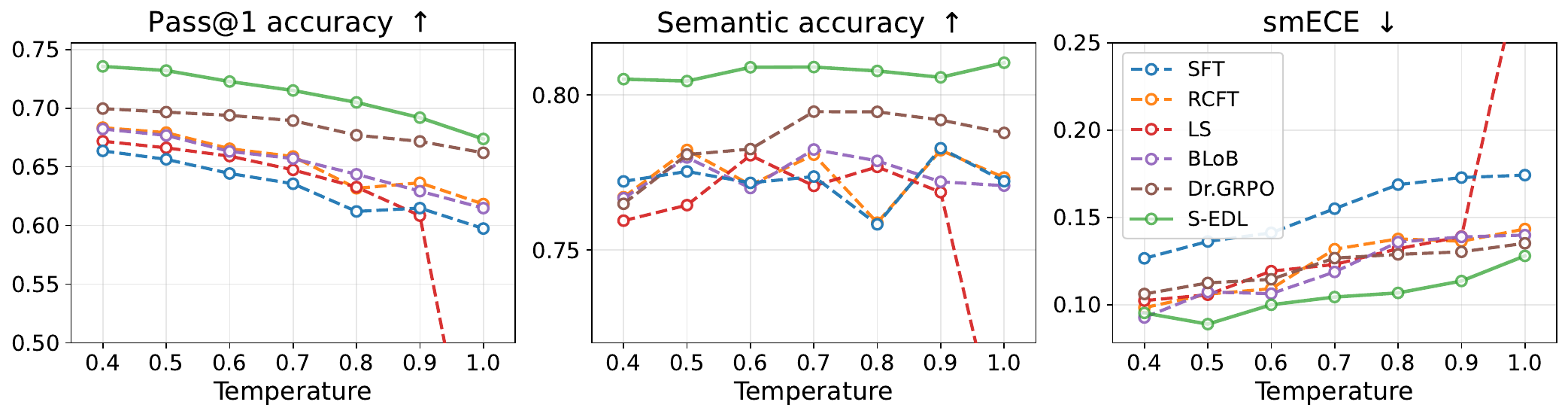}
        \caption{Fine-tuned model: Qwen3-8B-Base}
    \end{subfigure}
    \begin{subfigure}[b]{\textwidth}
        \centering
        \includegraphics[width=\linewidth]{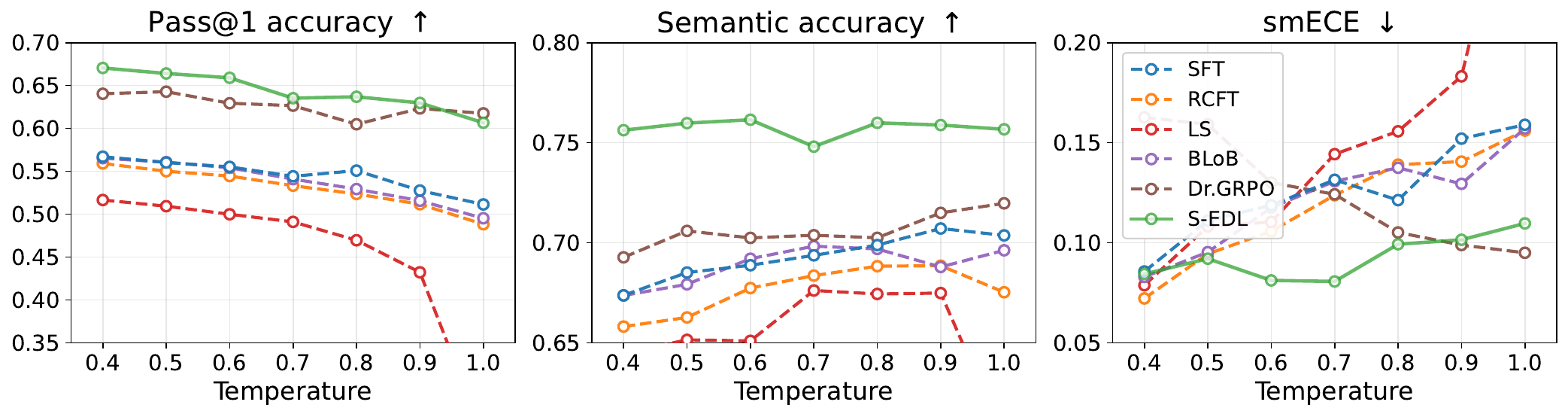}
        \caption{Fine-tuned model: Ministral3-8B-Base}
    \end{subfigure}
    \begin{subfigure}[b]{\textwidth}
        \centering
        \includegraphics[width=\linewidth]{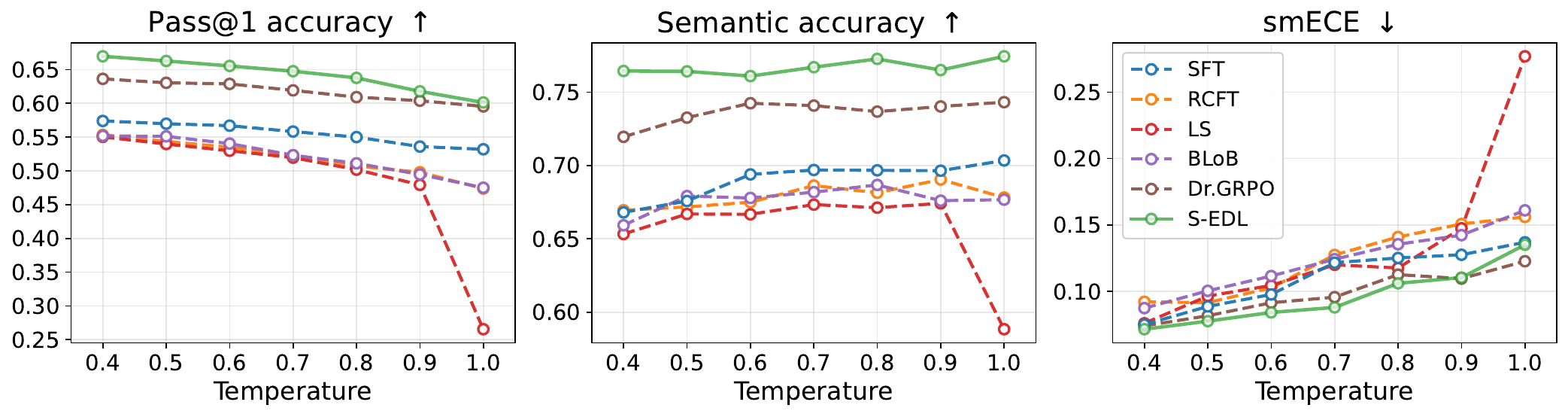}
        \caption{Fine-tuned model: MiMo-7B-Base}
    \end{subfigure}
    \caption{Average in-distribution performance across varying temperatures. Experiment settings are consistent with \cref{sec:in_distribution_evaluation}, except for the various temperatures. Note that certain data points, such as those for LS, may fall outside the plotted area because the y-axis limits are clamped to ensure better readability and visual clarity.}
    \label{fig:eval_across_temperatures}
\end{figure}

% \clearpage
% \newpage
\subsection{Visualization of the accuracy-calibration plane}
\label{sec:app:acc_calibration_plane}
\begin{figure}[t]
  \centering
  \begin{subfigure}[t]{0.32\linewidth}
    \centering
    \includegraphics[width=\linewidth]{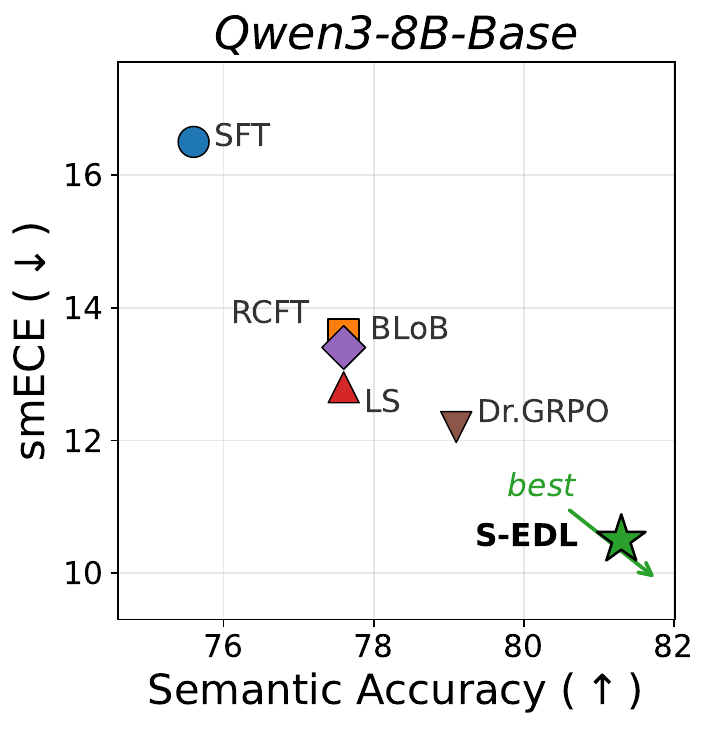}
    \caption{\textit{Qwen3-8B-Base}}
    \label{fig:acc_cal_qwen3}
  \end{subfigure}
  \hfill
  \begin{subfigure}[t]{0.32\linewidth}
    \centering
    \includegraphics[width=\linewidth]{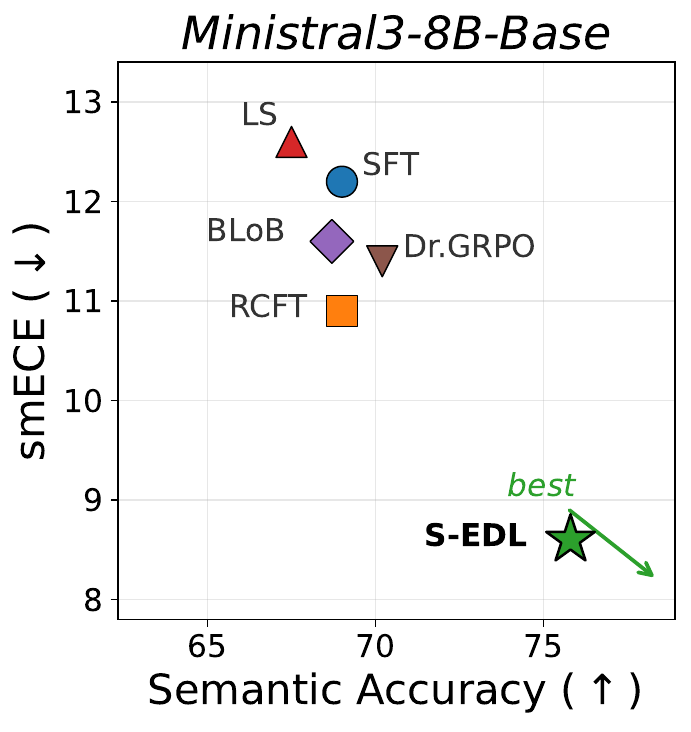}
    \caption{\textit{Ministral3-8B-Base}}
    \label{fig:acc_cal_ministral3}
  \end{subfigure}
  \hfill
  \begin{subfigure}[t]{0.32\linewidth}
    \centering
    \includegraphics[width=\linewidth]{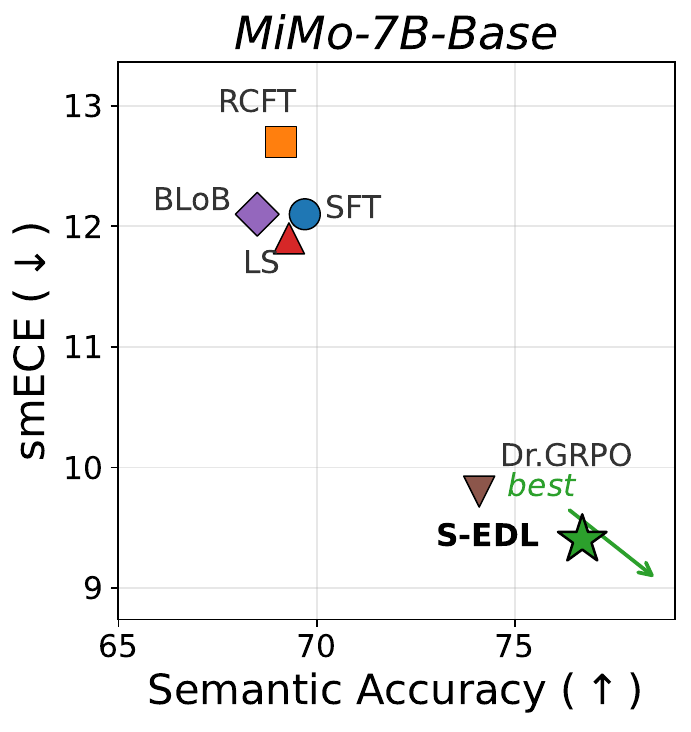}
    \caption{\textit{MiMo-7B-Base}}
    \label{fig:acc_cal_mimo}
  \end{subfigure}
  \caption{Accuracy--calibration plane, averaged across the four datasets. Bottom-right is the best corner (high S-Acc, low smECE).}
  \label{fig:accuracy_calibration}
\end{figure}

\cref{fig:accuracy_calibration} visualizes the averaged semantic accuracy and smECE plane across the four evaluated datasets. An optimal model should be situated in the bottom right corner of the plane, indicating both high accuracy and low calibration error. S-EDL consistently lies on the Pareto frontier of this plane across all three models.

\subsection{Extended per-token probability profiles}\label{sec:app:additional_per_token_prob}

\begin{figure}[t]
    \centering
    \begin{subfigure}[b]{\textwidth}
        \centering
        \includegraphics[width=\linewidth]{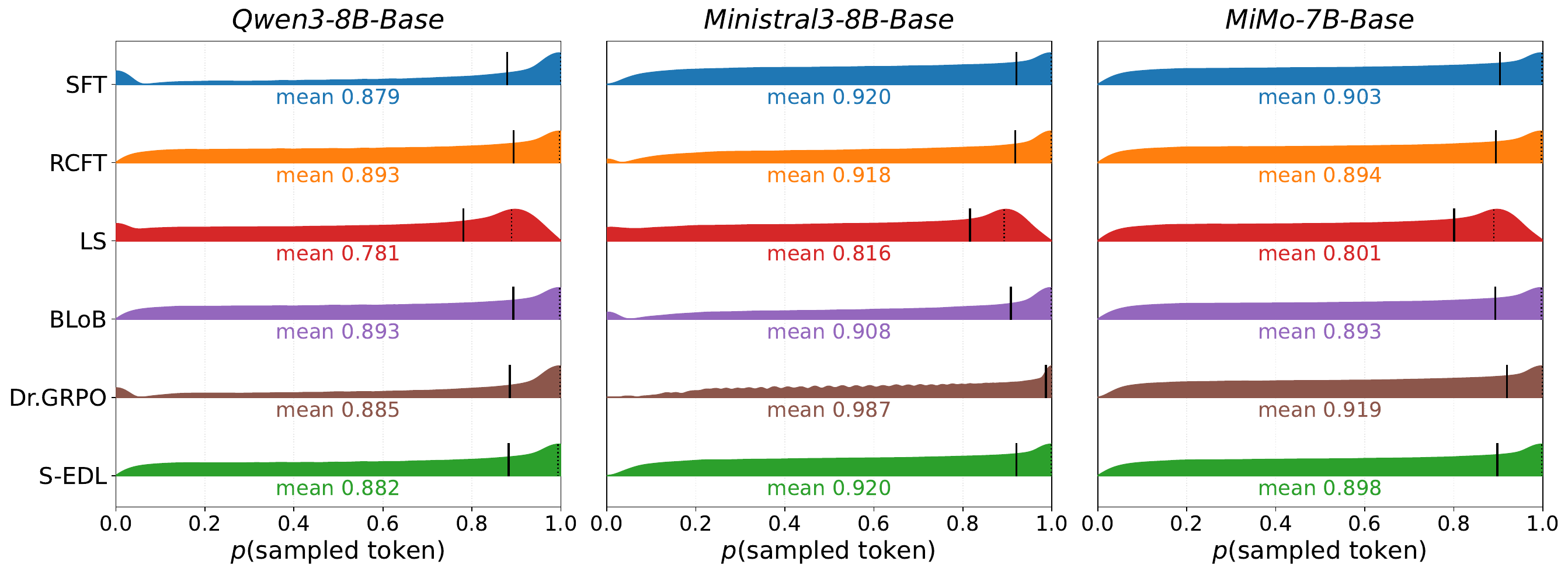}
        \caption{GSM8K}
        \label{fig:violin-gsm8k}
    \end{subfigure}
    \begin{subfigure}[b]{\textwidth}
        \centering
        \includegraphics[width=\linewidth]{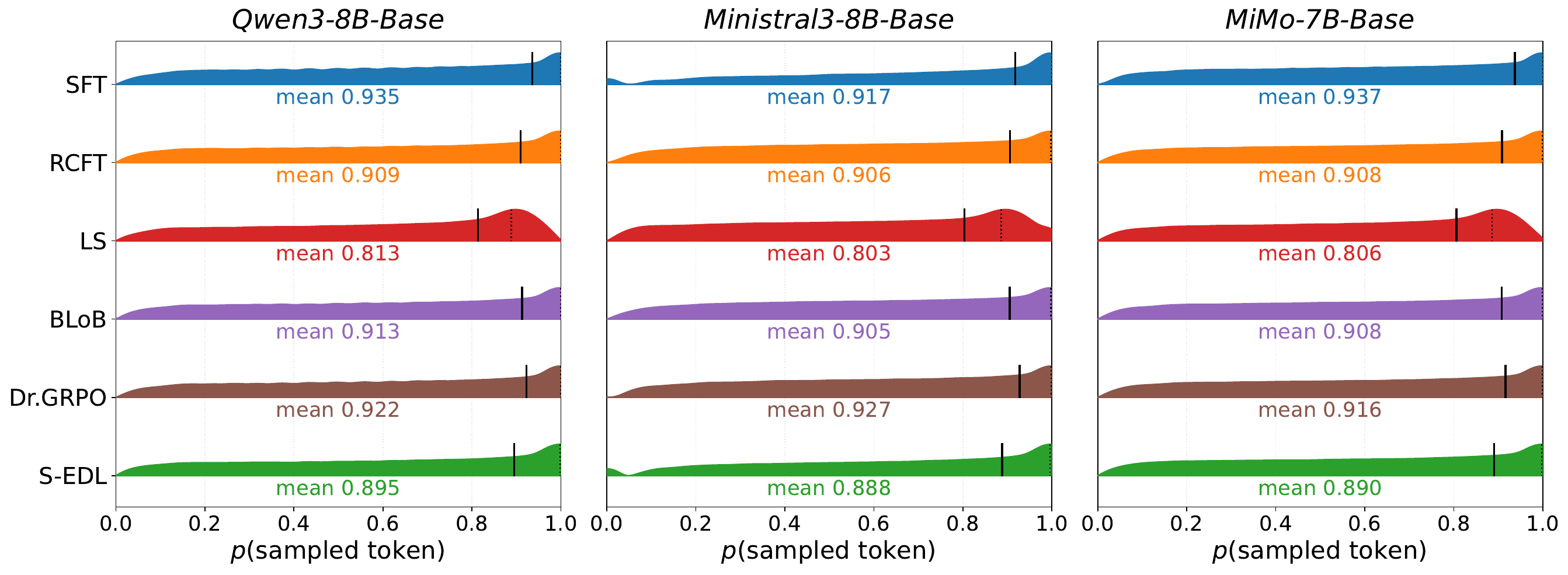}
        \caption{ChemistryQA}
        \label{fig:violin-chemistryqa}
    \end{subfigure}
    \begin{subfigure}[b]{\textwidth}
        \centering
        \includegraphics[width=\linewidth]{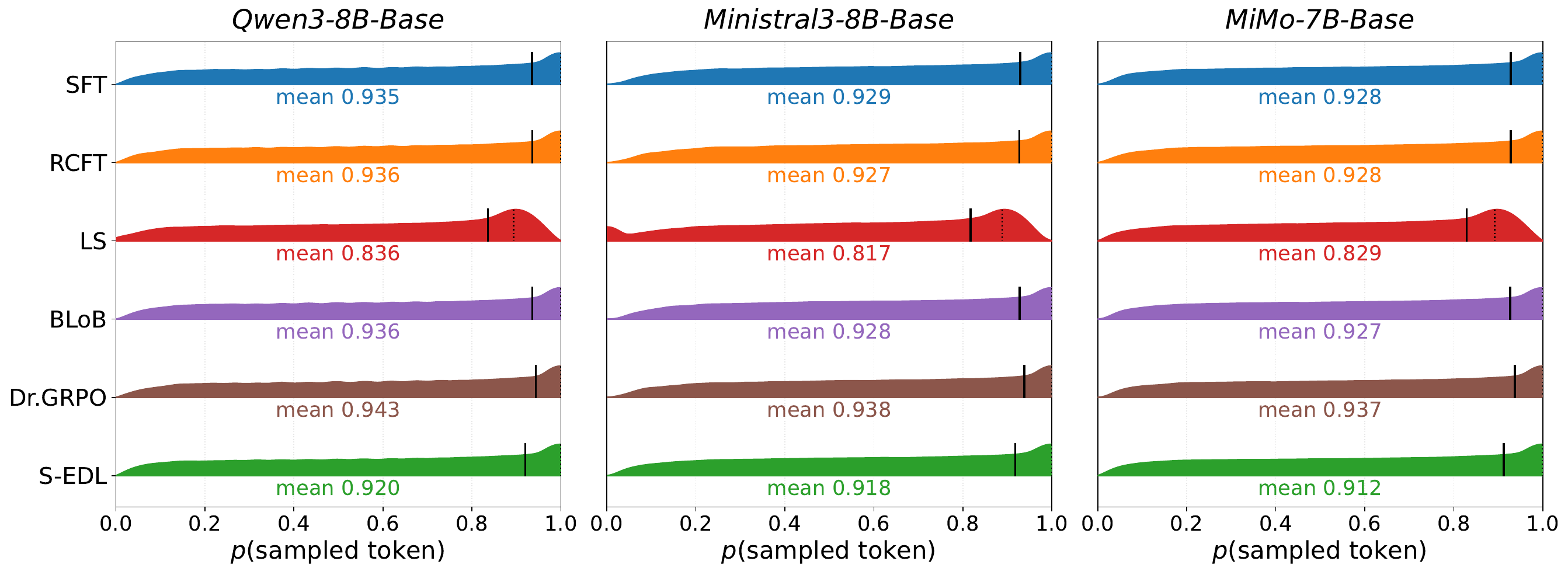}
        \caption{FormulaReasoning}
        \label{fig:violin-formulareasoning}
    \end{subfigure}
    \caption{Token probability distributions on GSM8K, ChemistryQA, and FormulaReasoning benchmarks.}
    \label{fig:token-prob-violin}
\end{figure}

We provide the extended per token probability profiles for the GSM8K, ChemistryQA, and FormulaReasoning datasets to supplement the main text analysis. \cref{fig:token-prob-violin} visualizes the token probability distributions across these three benchmarks. Consistent with the observations on the MATH dataset, the absolute difference between the S-EDL mean and the supervised fine tuning mean stays below 0.06 across all model and dataset pairs. This demonstrates that S-EDL achieves semantic calibration improvements without causing a drastic or systematic shift in the overall token level probability distribution. See \cref{fig:token-prob-violin} for the complete visual comparisons.

\clearpage
\subsection{Additional ablation study on $\beta$ and $\lmseweight$} \label{sec:app:additional_ablation_beta_weight}

\begin{figure}[h!]
    \centering
    \begin{subfigure}[b]{0.32\textwidth}
        \centering
        \includegraphics[width=\linewidth]{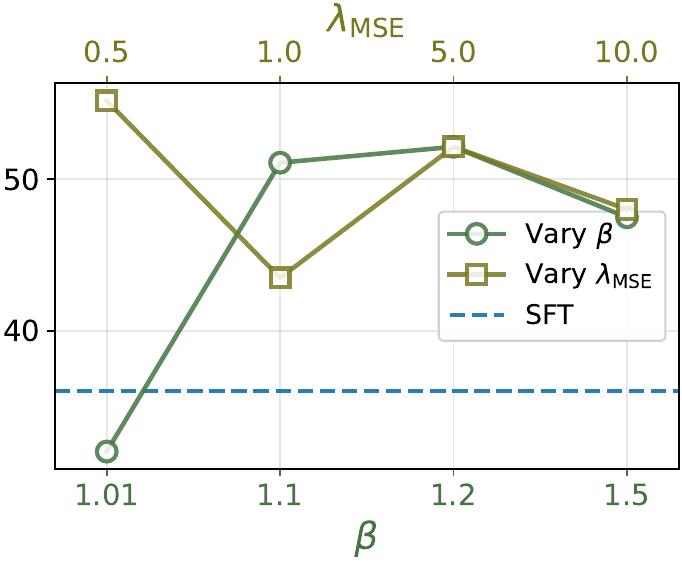}
        \caption{Pass@1 ($\uparrow$)}
    \end{subfigure}
    \begin{subfigure}[b]{0.32\textwidth}
        \centering
        \includegraphics[width=\linewidth]{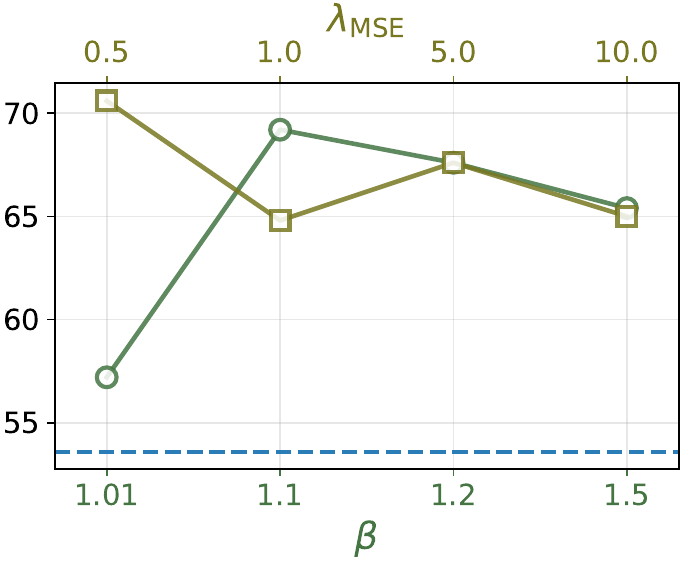}
        \caption{S-Acc. ($\uparrow$)}
    \end{subfigure}
    \begin{subfigure}[b]{0.32\textwidth}
        \centering
        \includegraphics[width=\linewidth]{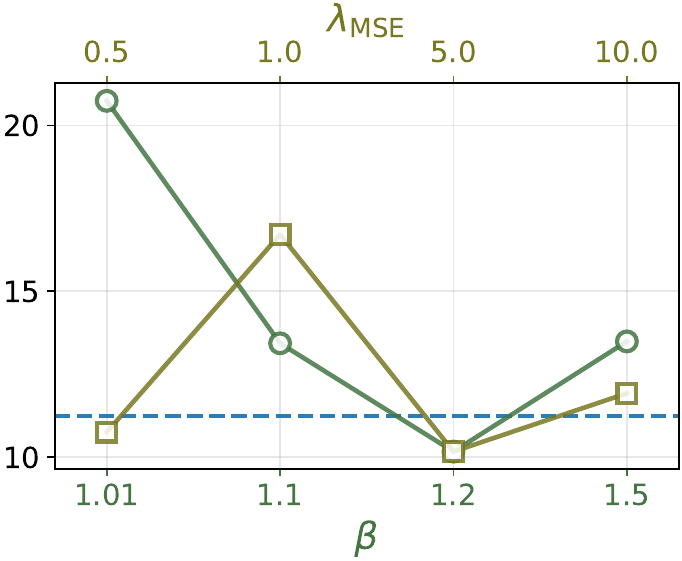}
        \caption{smECE ($\downarrow$)}
    \end{subfigure}
    \caption{Ablation study on $\beta$ and $\lambda_{\text{MSE}}$ with Ministral3-8B-Base trained on MATH and evaluated on MATH-500. The bottom $x$-axis varies $\beta$ with $\lambda_{\text{MSE}} = 5.0$ held fixed; the top $x$-axis varies $\lambda_{\text{MSE}}$ with $\beta = 1.2$ fixed. The default values $\beta=1.2$ and $\lmseweight=5.0$ demonstrate strong performance and are used throughout all experiments.}
    \label{fig:ablation_beta_weight_ministral3_8b}
\end{figure}

We present an additional ablation study on the hyperparameters $\beta$ and $\lmseweight$ using the Ministral3-8B-Base model trained on the MATH dataset and evaluated on the MATH-500 benchmark. As shown in \cref{fig:ablation_beta_weight_ministral3_8b}, the performance trends are consistent with those observed on the GSM8K dataset in the main text. Setting $\beta$ to a value close to 1, such as 1.01, severely weakens the evidential learning signal, which leads to a noticeable drop in accuracy and a spike in calibration error compared to the SFT baseline. In contrast, using the default parameters of $\beta=1.2$ and $\lmseweight=5.0$ provides a strong balance across all metrics, improving both Pass@1 and semantic accuracy while effectively lowering the smECE. This confirms that the selected default hyperparameters generalize well to harder reasoning tasks.

%---END SECTION---%

% \clearpage

\subsection{Prompts}\label{sec:app:prompts}

\subsubsection{System prompt of the LLM-as-judge} \label{sec:app:prompt_llm_as_judge}
\input{prompts/clustering}

\subsection{Examples of training prompts}
\label{sec:app:examples_training_prompts}

In the following, we provide examples of the training prompts used for fine tuning Qwen3-8B-Base. We intentionally do not enclose the chain of thought reasoning within \textless{}think\textgreater{} and \textless{}/think\textgreater{} blocks. We empirically observed that the base models have not been extensively trained on data containing these special tokens. Introducing these structural tokens during LoRA fine tuning causes the model to append random text after the \textbackslash{}boxed\{\} output and occasionally become trapped in repetitive generation loops.

\input{prompts/gsm8k_qwen3_8b}
\input{prompts/math_qwen3_8b}
\input{prompts/chemistryqa_qwen3_8b}
\input{prompts/formulareasoning_qwen3_8b}

%% file: prompts/clustering.tex
\begin{prompt}[System prompt of the LLM-as-judge for semantic clustering]
You are an answer-equivalence judge.

\medskip
\textbf{Task:}
\begin{itemize}
  \item You are given a Candidate Answer and a list of cluster representatives indexed as:\\
        0: \ldots, 1: \ldots, 2: \ldots
  \item Return the index of the cluster representative that is equivalent to the Candidate Answer.
  \item If none are equivalent, return $-1$.
\end{itemize}

\textbf{Equivalence rules:}
\begin{enumerate}
  \item Treat formatting differences as equivalent when numeric meaning matches:
        decimal/fraction/scientific/percentage/currency/unit notation.
  \item Unit omission is acceptable if numeric value matches.
  \item Allow rounding differences only when values agree to at least 3 significant digits.
  \item Reject sign flips and order-of-magnitude errors.
  \item For non-numeric answers, judge semantic equivalence of meaning, ignoring minor
        phrasing or word-order differences.
  \item For chemical equations:
  \begin{itemize}
    \item Arrow variants are equivalent ($\rightarrow$, ->, $\longrightarrow$, $\rightleftharpoons$).
    \item Ordering of species within each side is irrelevant (A + B $\equiv$ B + A).
    \item Subscript/superscript vs inline notation is equivalent
          ($\mathrm{N_2H_4} \equiv$ N2H4, $\mathrm{ClO_3^-} \equiv$ ClO3-).
    \item Stoichiometrically scaled equations (all coefficients multiplied by the same
          positive integer) are equivalent.
    \item State symbol omission is acceptable, but conflicting state symbols are NOT equivalent
          (e.g., H2O matches H2O(l), but H2O(g) does NOT match H2O(l)).
  \end{itemize}
\end{enumerate}

\textbf{Few-shot examples:}

\textit{Candidate Answer:} 3.50 mol\\
\textit{Cluster representatives:}\\
0: 3500 mmol\\
1: 3.5 moles\\
2: 0.35 mol\\
\textit{Matching index:} 1

\medskip
\textit{Candidate Answer:} $\frac{84}{2}$\\
\textit{Cluster representatives:}\\
0: 41\\
1: 42\\
2: $-42$\\
\textit{Matching index:} 1

\medskip
\textit{Candidate Answer:} 42\\
\textit{Cluster representatives:}\\
0: 42 J\\
1: 43\\
\textit{Matching index:} 0

\medskip
\textit{Candidate Answer:} the speed is about 10.3 m/s\textsuperscript{2}\\
\textit{Cluster representatives:}\\
0: 10.3 m/s\\
1: $-10.3$ m/s\textsuperscript{2}\\
\textit{Matching index:} $-1$

\medskip
\textit{Candidate Answer:} 4 $\mathrm{ClO_3^-}$(aq) + 3 $\mathrm{N_2H_4}$(g) $\rightarrow$
  4 $\mathrm{Cl^-}$(aq) + 6 $\mathrm{H_2O}$(l) + 6 NO(g)\\
\textit{Cluster representatives:}\\
0: 3 $\mathrm{N_2H_4}$(g) + 4 $\mathrm{ClO_3^-}$(aq) -> 6 NO(g)
   + 6 $\mathrm{H_2O}$(g) + 4 $\mathrm{Cl^-}$(aq)\\
1: 3 $\mathrm{N_2H_4}$(g) + 4 $\mathrm{ClO_3^-}$(aq) $\rightarrow$
   6 NO(g) + 6 $\mathrm{H_2O}$(l) + 4 $\mathrm{Cl^-}$(aq)\\
\textit{Matching index:} 1

\medskip
\textbf{Output format:}
\begin{itemize}
  \item Final line must be exactly:\\
        \texttt{Matching index: <integer>}
  \item Use $-1$ when none match.
\end{itemize}
\end{prompt}

%% file: prompts/gsm8k_qwen3_8b.tex
\begin{prompt}[An example of training prompts on GSM8K dataset]
\textless{}\textbar{}im\_start\textbar{}\textgreater{}system\\
Solve the following problem. First, reason through the problem step by step after 'Reasoning:'. Then, provide your final answer after 'My answer is:' wrapped in \textbackslash{}boxed\{\}.\textless{}\textbar{}im\_end\textbar{}\textgreater{}\\
\textless{}\textbar{}im\_start\textbar{}\textgreater{}user\\
Question: Natalia sold clips to 48 of her friends in April, and then she sold half as many clips in May. How many clips did Natalia sell altogether in April and May?\\
Answer:\textless{}\textbar{}im\_end\textbar{}\textgreater{}\\
\textless{}\textbar{}im\_start\textbar{}\textgreater{}assistant\\
Reasoning: Natalia sold 48/2 = \textless{}\textless{}48/2=24\textgreater{}\textgreater{}24 clips in May.\\
Natalia sold 48+24 = \textless{}\textless{}48+24=72\textgreater{}\textgreater{}72 clips altogether in April and May.\\
My answer is: \textbackslash{}boxed\{72\}\textless{}\textbar{}endoftext\textbar{}\textgreater{}
\end{prompt}

%% file: prompts/math_qwen3_8b.tex
\begin{prompt}[An example of training prompts on MATH dataset]
\textless{}\textbar{}im\_start\textbar{}\textgreater{}system\\
Solve the following problem. First, reason through the problem step by step after 'Reasoning:'. Then, provide your final answer after 'My answer is:' wrapped in \textbackslash{}boxed\{\}.\textless{}\textbar{}im\_end\textbar{}\textgreater{}\\
\textless{}\textbar{}im\_start\textbar{}\textgreater{}user\\
Question: How many 4-letter words with at least one consonant can be constructed from the letters \$A\$, \$B\$, \$C\$, \$D\$, and \$E\$? (Note that \$B\$, \$C\$, and \$D\$ are consonants, any word is valid, not just English language words, and letters may be used more than once.)\\
Answer:\textless{}\textbar{}im\_end\textbar{}\textgreater{}\\
\textless{}\textbar{}im\_start\textbar{}\textgreater{}assistant\\
Reasoning: First we count the number of all 4-letter words with no restrictions on the word. Then we count the number of 4-letter words with no consonants. We then subtract to get the answer.\\
\\
Each letter of a word must be one of \$A\$, \$B\$, \$C\$, \$D\$, or \$E\$, so the number of 4-letter words with no restrictions on the word is \$5\textbackslash{}times 5\textbackslash{}times 5\textbackslash{}times 5=625\$. Each letter of a word with no consonant must be one of \$A\$ or \$E\$. So the number of all 4-letter words with no consonants is \$2\textbackslash{}times 2\textbackslash{}times 2\textbackslash{}times 2=16\$. Therefore, the number of 4-letter words with at least one consonant is \$625-16=609\$.\\
My answer is: \textbackslash{}boxed\{609\}\textless{}\textbar{}endoftext\textbar{}\textgreater{}
\end{prompt}

%% file: prompts/chemistryqa_qwen3_8b.tex
\begin{prompt}[An example of training prompts on ChemistryQA dataset]
\textless{}\textbar{}im\_start\textbar{}\textgreater{}system\\
Solve the following problem. First, reason through the problem step by step after 'Reasoning:'. Then, provide your final answer after 'My answer is:' wrapped in \textbackslash{}boxed\{\}.\textless{}\textbar{}im\_end\textbar{}\textgreater{}\\
\textless{}\textbar{}im\_start\textbar{}\textgreater{}user\\
Question: A mixture of \#"N"\_2\#, \#"O"\_2\#. and \#"CO"\_2\# has a total pressure of \#"740 torr"\# In this mixture, the partial pressure of \#"N"\_2\# is \#"120 torr"\# and the partial pressure of \#"O"\_2\# is \#"400 torr"\#. What is the partial pressure of \#"CO"\_2\# ?\\
~\\
~\\
Answer:\textless{}\textbar{}im\_end\textbar{}\textgreater{}\\
\textless{}\textbar{}im\_start\textbar{}\textgreater{}assistant\\
Reasoning: The total pressure equals the sum of the partial pressures:\\
~\\
\textbackslash{}[\\`
P\_\{\textbackslash{}text\{total\}\} = P\_\{\textbackslash{}mathrm\{N\_2\}\} + P\_\{\textbackslash{}mathrm\{O\_2\}\} + P\_\{\textbackslash{}mathrm\{CO\_2\}\}.\\
\textbackslash{}]\\
~\\
Given \textbackslash{}(P\_\{\textbackslash{}text\{total\}\} = 740\textbackslash{} \textbackslash{}text\{torr\},\textbackslash{}; P\_\{\textbackslash{}mathrm\{N\_2\}\} = 120\textbackslash{} \textbackslash{}text\{torr\},\textbackslash{}; P\_\{\textbackslash{}mathrm\{O\_2\}\} = 400\textbackslash{} \textbackslash{}text\{torr\}\textbackslash{}),\\
~\\
\textbackslash{}[\\`
P\_\{\textbackslash{}mathrm\{CO\_2\}\} = 740 - (120 + 400) = 740 - 520 = 220\textbackslash{} \textbackslash{}text\{torr\}.\\
\textbackslash{}]\\
My answer is: \textbackslash{}boxed\{220\textbackslash{} \textbackslash{}text\{torr\}\}\textless{}\textbar{}endoftext\textbar{}\textgreater{}
    
\end{prompt}

%% file: prompts/formulareasoning_qwen3_8b.tex
\begin{prompt}[An example of training prompts on FormulaReasoning dataset]
\textless{}\textbar{}im\_start\textbar{}\textgreater{}system\\
Solve the following problem. First, reason through the problem step by step after 'Reasoning:'. Then, provide your final answer with units after 'My answer is:' wrapped in \textbackslash{}boxed\{\}.\textless{}\textbar{}im\_end\textbar{}\textgreater{}\\
\textless{}\textbar{}im\_start\textbar{}\textgreater{}user\\
Question: Rational classification and utilization of waste can protect the environment and turn waste into valuables. Under certain conditions, 1 ton of classified waste can "extract" [X] of fuel oil, where the calorific value of the fuel oil is 3.0$\times$10\textasciicircum{}7 J/kg. Question: [c\_water=4.2$\times$10\textasciicircum{}3 J/(kg$\cdot$$^\circ$C)] How much heat energy is released when the fuel oil is completely burned? The specific heat capacity of water is 4.2$\times$10\textasciicircum{}3 J/(kg$\cdot$$^\circ$C). If we already know that the answer to the above question is 4.2$\times$10\textasciicircum{}9 J, then what should be the content corresponding to [X] in the question?\\
Answer:\textless{}\textbar{}im\_end\textbar{}\textgreater{}\\
\textless{}\textbar{}im\_start\textbar{}\textgreater{}assistant\\
Reasoning: The heat energy released when fuel oil is burned is calculated using \$ Q = m \textbackslash{}cdot \textbackslash{}text\{calorific value\} \$. Rearranging for mass \$ m \$, we get \$ m = \textbackslash{}frac\{Q\}\{\textbackslash{}text\{calorific value\}\} = \textbackslash{}frac\{4.2 \textbackslash{}times 10\textasciicircum{}9 \textbackslash{}, \textbackslash{}text\{J\}\}\{3.0 \textbackslash{}times 10\textasciicircum{}7 \textbackslash{}, \textbackslash{}text\{J/kg\}\} = 140 \textbackslash{}, \textbackslash{}text\{kg\} \$. Since this mass corresponds to the fuel oil extracted from 1 ton (1000 kg) of waste, \$[X]\$ must be 140 kg.\\
My answer is: \textbackslash{}boxed\{140 kg\}\textless{}\textbar{}endoftext\textbar{}\textgreater{}
\end{prompt}

%% file: checklist.tex
\section*{NeurIPS Paper Checklist}

\begin{enumerate}

\item {\bf Claims}
    \item[] Question: Do the main claims made in the abstract and introduction accurately reflect the paper's contributions and scope?
    \item[] Answer: \answerYes{} % Replace by \answerYes{}, \answerNo{}, or \answerNA{}.
    \item[] Justification: The abstract and introduction state three contributions: (i) extending EDL from fixed-choice to open-ended semantic calibration, (ii) a novel parametrization that derives a differentiable evidence vector from sequence likelihoods and theoretically unifies Semantic Entropy under the evidential framework, and (iii) empirical gains compared to baselines. These claims are directly supported by the methodology in \cref{sec:method}, the theoretical results in \cref{sec:gradient}, and the experimental results in \cref{tab:in_distribution_acc_vs_smece,tab:cross_domain} and \cref{sec:experiments}.
    \item[] Guidelines:
    \begin{itemize}
        \item The answer \answerNA{} means that the abstract and introduction do not include the claims made in the paper.
        \item The abstract and/or introduction should clearly state the claims made, including the contributions made in the paper and important assumptions and limitations. A \answerNo{} or \answerNA{} answer to this question will not be perceived well by the reviewers. 
        \item The claims made should match theoretical and experimental results, and reflect how much the results can be expected to generalize to other settings. 
        \item It is fine to include aspirational goals as motivation as long as it is clear that these goals are not attained by the paper. 
    \end{itemize}

\item {\bf Limitations}
    \item[] Question: Does the paper discuss the limitations of the work performed by the authors?
    \item[] Answer: \answerYes{} % Replace by \answerYes{}, \answerNo{}, or \answerNA{}.
    \item[] Justification: \cref{sec:conclusion} contains a dedicated ``Limitations'' paragraph.
    \item[] Guidelines:
    \begin{itemize}
        \item The answer \answerNA{} means that the paper has no limitation while the answer \answerNo{} means that the paper has limitations, but those are not discussed in the paper. 
        \item The authors are encouraged to create a separate ``Limitations'' section in their paper.
        \item The paper should point out any strong assumptions and how robust the results are to violations of these assumptions (e.g., independence assumptions, noiseless settings, model well-specification, asymptotic approximations only holding locally). The authors should reflect on how these assumptions might be violated in practice and what the implications would be.
        \item The authors should reflect on the scope of the claims made, e.g., if the approach was only tested on a few datasets or with a few runs. In general, empirical results often depend on implicit assumptions, which should be articulated.
        \item The authors should reflect on the factors that influence the performance of the approach. For example, a facial recognition algorithm may perform poorly when image resolution is low or images are taken in low lighting. Or a speech-to-text system might not be used reliably to provide closed captions for online lectures because it fails to handle technical jargon.
        \item The authors should discuss the computational efficiency of the proposed algorithms and how they scale with dataset size.
        \item If applicable, the authors should discuss possible limitations of their approach to address problems of privacy and fairness.
        \item While the authors might fear that complete honesty about limitations might be used by reviewers as grounds for rejection, a worse outcome might be that reviewers discover limitations that aren't acknowledged in the paper. The authors should use their best judgment and recognize that individual actions in favor of transparency play an important role in developing norms that preserve the integrity of the community. Reviewers will be specifically instructed to not penalize honesty concerning limitations.
    \end{itemize}

\item {\bf Theory assumptions and proofs}
    \item[] Question: For each theoretical result, does the paper provide the full set of assumptions and a complete (and correct) proof?
    \item[] Answer: \answerYes{} % Replace by \answerYes{}, \answerNo{}, or \answerNA{}.
    \item[] Justification: The paper presents \cref{thm:consistency}, \cref{thm:ll_gradient}, \cref{thm:sign_of_grad}, and \cref{thm:conservation}, each with explicitly stated assumptions (e.g., $\beta > 1$ for \cref{thm:sign_of_grad}; i.i.d. Monte Carlo sampling for \cref{thm:consistency}) and full proofs provided in \cref{sec:app:proof_consistency,sec:app:proof_ll_gradient,sec:app:proof_sign_of_grad,sec:app:proof_conservation}. All theoretical statements are numbered and cross-referenced from the main text.
    \item[] Guidelines:
    \begin{itemize}
        \item The answer \answerNA{} means that the paper does not include theoretical results. 
        \item All the theorems, formulas, and proofs in the paper should be numbered and cross-referenced.
        \item All assumptions should be clearly stated or referenced in the statement of any theorems.
        \item The proofs can either appear in the main paper or the supplemental material, but if they appear in the supplemental material, the authors are encouraged to provide a short proof sketch to provide intuition. 
        \item Inversely, any informal proof provided in the core of the paper should be complemented by formal proofs provided in appendix or supplemental material.
        \item Theorems and Lemmas that the proof relies upon should be properly referenced. 
    \end{itemize}

    \item {\bf Experimental result reproducibility}
    \item[] Question: Does the paper fully disclose all the information needed to reproduce the main experimental results of the paper to the extent that it affects the main claims and/or conclusions of the paper (regardless of whether the code and data are provided or not)?
    \item[] Answer: \answerYes{} % Replace by \answerYes{}, \answerNo{}, or \answerNA{}.
    \item[] Justification: \cref{sec:exp_setup} specifies the three base models (Qwen3-8B-Base, Ministral3-8B-Base, MiMo-7B-Base), LoRA configuration (rank 16, $\alpha$=32), evidential hyperparameters ($\beta=1.2$, $\lmseweight=5.0$), per-dataset $N_{\text{train}}$ values, and the evaluation protocol ($\texttt{top\_p}=0.9$, per-model temperatures, $N_{\text{eval}}=50$). \cref{sec:app:dataset_construction} details dataset construction, \cref{sec:app:training_hyperparameters} lists full optimization hyperparameters, and \cref{sec:app:baseline_hyperparameters} documents baseline configurations. The find-first clustering algorithm is fully specified in \cref{alg:find-first}, and example training prompts are given in \cref{sec:app:prompts}.
    \item[] Guidelines:
    \begin{itemize}
        \item The answer \answerNA{} means that the paper does not include experiments.
        \item If the paper includes experiments, a \answerNo{} answer to this question will not be perceived well by the reviewers: Making the paper reproducible is important, regardless of whether the code and data are provided or not.
        \item If the contribution is a dataset and\slash or model, the authors should describe the steps taken to make their results reproducible or verifiable. 
        \item Depending on the contribution, reproducibility can be accomplished in various ways. For example, if the contribution is a novel architecture, describing the architecture fully might suffice, or if the contribution is a specific model and empirical evaluation, it may be necessary to either make it possible for others to replicate the model with the same dataset, or provide access to the model. In general. releasing code and data is often one good way to accomplish this, but reproducibility can also be provided via detailed instructions for how to replicate the results, access to a hosted model (e.g., in the case of a large language model), releasing of a model checkpoint, or other means that are appropriate to the research performed.
        \item While NeurIPS does not require releasing code, the conference does require all submissions to provide some reasonable avenue for reproducibility, which may depend on the nature of the contribution. For example
        \begin{enumerate}
            \item If the contribution is primarily a new algorithm, the paper should make it clear how to reproduce that algorithm.
            \item If the contribution is primarily a new model architecture, the paper should describe the architecture clearly and fully.
            \item If the contribution is a new model (e.g., a large language model), then there should either be a way to access this model for reproducing the results or a way to reproduce the model (e.g., with an open-source dataset or instructions for how to construct the dataset).
            \item We recognize that reproducibility may be tricky in some cases, in which case authors are welcome to describe the particular way they provide for reproducibility. In the case of closed-source models, it may be that access to the model is limited in some way (e.g., to registered users), but it should be possible for other researchers to have some path to reproducing or verifying the results.
        \end{enumerate}
    \end{itemize}

\item {\bf Open access to data and code}
    \item[] Question: Does the paper provide open access to the data and code, with sufficient instructions to faithfully reproduce the main experimental results, as described in supplemental material?
    \item[] Answer: \answerYes{} % Replace by \answerYes{}, \answerNo{}, or \answerNA{}.
    \item[] Justification: We will release the code repository upon acceptance or upon requests from reviewers, including training scripts, the find-first clustering implementation, evaluation pipelines, and configuration files. The four datasets (GSM8K, MATH, ChemistryQA, FormulaReasoning) are publicly available; we provide preprocessing scripts to reproduce the filtered training splits described in \cref{sec:app:dataset_construction}.
    \item[] Guidelines:
    \begin{itemize}
        \item The answer \answerNA{} means that paper does not include experiments requiring code.
        \item Please see the NeurIPS code and data submission guidelines (\url{https://neurips.cc/public/guides/CodeSubmissionPolicy}) for more details.
        \item While we encourage the release of code and data, we understand that this might not be possible, so \answerNo{} is an acceptable answer. Papers cannot be rejected simply for not including code, unless this is central to the contribution (e.g., for a new open-source benchmark).
        \item The instructions should contain the exact command and environment needed to run to reproduce the results. See the NeurIPS code and data submission guidelines (\url{https://neurips.cc/public/guides/CodeSubmissionPolicy}) for more details.
        \item The authors should provide instructions on data access and preparation, including how to access the raw data, preprocessed data, intermediate data, and generated data, etc.
        \item The authors should provide scripts to reproduce all experimental results for the new proposed method and baselines. If only a subset of experiments are reproducible, they should state which ones are omitted from the script and why.
        \item At submission time, to preserve anonymity, the authors should release anonymized versions (if applicable).
        \item Providing as much information as possible in supplemental material (appended to the paper) is recommended, but including URLs to data and code is permitted.
    \end{itemize}

\item {\bf Experimental setting/details}
    \item[] Question: Does the paper specify all the training and test details (e.g., data splits, hyperparameters, how they were chosen, type of optimizer) necessary to understand the results?
    \item[] Answer: \answerYes{} % Replace by \answerYes{}, \answerNo{}, or \answerNA{}.
    \item[] Justification: \cref{sec:exp_setup} documents data splits, base models, LoRA rank, evidential hyperparameters, the per-model decoding temperatures (chosen on a held-out validation split to match per-model entropy), and $N_{\text{eval}}$. \cref{sec:app:dataset_construction} specifies how each dataset's training split was constructed, \cref{sec:app:training_hyperparameters} lists optimizer, learning rate, batch size, and other training details, and \cref{sec:app:baseline_hyperparameters} documents the hyperparameters used for every baseline (RCFT, LS, BLoB, Dr.GRPO).
    \item[] Guidelines:
    \begin{itemize}
        \item The answer \answerNA{} means that the paper does not include experiments.
        \item The experimental setting should be presented in the core of the paper to a level of detail that is necessary to appreciate the results and make sense of them.
        \item The full details can be provided either with the code, in appendix, or as supplemental material.
    \end{itemize}

\item {\bf Experiment statistical significance}
    \item[] Question: Does the paper report error bars suitably and correctly defined or other appropriate information about the statistical significance of the experiments?
    \item[] Answer: \answerNo{} % Replace by \answerYes{}, \answerNo{}, or \answerNA{}.
    \item[] Justification: The main calibration and accuracy results in \cref{tab:in_distribution_acc_vs_smece} and \cref{tab:cross_domain} report point estimates from a single training run per (model, dataset, method) configuration without error bars, due to compute constraints associated with fine-tuning three 7–8B parameter models across four datasets and six methods. Variance information is reported where available (\cref{fig:clustering_complexity} shows standard deviations of verifier-call counts; \cref{fig:token_probability} and \cref{fig:token-prob-violin} visualize full token-probability distributions).
    \item[] Guidelines:
    \begin{itemize}
        \item The answer \answerNA{} means that the paper does not include experiments.
        \item The authors should answer \answerYes{} if the results are accompanied by error bars, confidence intervals, or statistical significance tests, at least for the experiments that support the main claims of the paper.
        \item The factors of variability that the error bars are capturing should be clearly stated (for example, train/test split, initialization, random drawing of some parameter, or overall run with given experimental conditions).
        \item The method for calculating the error bars should be explained (closed form formula, call to a library function, bootstrap, etc.)
        \item The assumptions made should be given (e.g., Normally distributed errors).
        \item It should be clear whether the error bar is the standard deviation or the standard error of the mean.
        \item It is OK to report 1-sigma error bars, but one should state it. The authors should preferably report a 2-sigma error bar than state that they have a 96\% CI, if the hypothesis of Normality of errors is not verified.
        \item For asymmetric distributions, the authors should be careful not to show in tables or figures symmetric error bars that would yield results that are out of range (e.g., negative error rates).
        \item If error bars are reported in tables or plots, the authors should explain in the text how they were calculated and reference the corresponding figures or tables in the text.
    \end{itemize}

\item {\bf Experiments compute resources}
    \item[] Question: For each experiment, does the paper provide sufficient information on the computer resources (type of compute workers, memory, time of execution) needed to reproduce the experiments?
    \item[] Answer: \answerYes{} % Replace by \answerYes{}, \answerNo{}, or \answerNA{}.
    \item[] Justification: All fine-tuning experiments were conducted on NVIDIA H100 / H200 GPUs. Each run uses LoRA adapters (rank 16). Per-run training time is approximately 1-12 hours, depending on the fine-tuning method. Memory constraints determined $N_{\text{train}}$ per dataset (see \cref{sec:exp_setup}). Total compute including baselines, ablations, and preliminary runs is approximately 2,500 GPU-hours.
    \item[] Guidelines:
    \begin{itemize}
        \item The answer \answerNA{} means that the paper does not include experiments.
        \item The paper should indicate the type of compute workers CPU or GPU, internal cluster, or cloud provider, including relevant memory and storage.
        \item The paper should provide the amount of compute required for each of the individual experimental runs as well as estimate the total compute. 
        \item The paper should disclose whether the full research project required more compute than the experiments reported in the paper (e.g., preliminary or failed experiments that didn't make it into the paper). 
    \end{itemize}
    
\item {\bf Code of ethics}
    \item[] Question: Does the research conducted in the paper conform, in every respect, with the NeurIPS Code of Ethics \url{https://neurips.cc/public/EthicsGuidelines}?
    \item[] Answer: \answerYes{} % Replace by \answerYes{}, \answerNo{}, or \answerNA{}.
    \item[] Justification: The research uses publicly available datasets and open-weight base models, does not involve human subjects, and aims to improve the trustworthiness of LLM outputs by reducing miscalibration. We have reviewed the NeurIPS Code of Ethics and the work conforms with it in all respects.
    \item[] Guidelines:
    \begin{itemize}
        \item The answer \answerNA{} means that the authors have not reviewed the NeurIPS Code of Ethics.
        \item If the authors answer \answerNo, they should explain the special circumstances that require a deviation from the Code of Ethics.
        \item The authors should make sure to preserve anonymity (e.g., if there is a special consideration due to laws or regulations in their jurisdiction).
    \end{itemize}

\item {\bf Broader impacts}
    \item[] Question: Does the paper discuss both potential positive societal impacts and negative societal impacts of the work performed?
    \item[] Answer: \answerYes{} % Replace by \answerYes{}, \answerNo{}, or \answerNA{}.
    \item[] Justification: Improving semantic calibration helps users and downstream systems decide when to trust an LLM's open-ended output, which is broadly beneficial in high-stakes settings such as scientific reasoning and decision support. Potential negative impacts are limited: S-EDL is a fine-tuning objective rather than a capability-extending technique, and a better-calibrated model could in principle make over-trusted but plausible, looking incorrect outputs less obvious, we mitigate this concern by reporting both accuracy and calibration error so practitioners can audit both dimensions.
    \item[] Guidelines:
    \begin{itemize}
        \item The answer \answerNA{} means that there is no societal impact of the work performed.
        \item If the authors answer \answerNA{} or \answerNo, they should explain why their work has no societal impact or why the paper does not address societal impact.
        \item Examples of negative societal impacts include potential malicious or unintended uses (e.g., disinformation, generating fake profiles, surveillance), fairness considerations (e.g., deployment of technologies that could make decisions that unfairly impact specific groups), privacy considerations, and security considerations.
        \item The conference expects that many papers will be foundational research and not tied to particular applications, let alone deployments. However, if there is a direct path to any negative applications, the authors should point it out. For example, it is legitimate to point out that an improvement in the quality of generative models could be used to generate Deepfakes for disinformation. On the other hand, it is not needed to point out that a generic algorithm for optimizing neural networks could enable people to train models that generate Deepfakes faster.
        \item The authors should consider possible harms that could arise when the technology is being used as intended and functioning correctly, harms that could arise when the technology is being used as intended but gives incorrect results, and harms following from (intentional or unintentional) misuse of the technology.
        \item If there are negative societal impacts, the authors could also discuss possible mitigation strategies (e.g., gated release of models, providing defenses in addition to attacks, mechanisms for monitoring misuse, mechanisms to monitor how a system learns from feedback over time, improving the efficiency and accessibility of ML).
    \end{itemize}
    
\item {\bf Safeguards}
    \item[] Question: Does the paper describe safeguards that have been put in place for responsible release of data or models that have a high risk for misuse (e.g., pre-trained language models, image generators, or scraped datasets)?
    \item[] Answer: \answerNA{} % Replace by \answerYes{}, \answerNo{}, or \answerNA{}.
    \item[] Justification: The paper does not release new pre-trained models, image generators, or scraped datasets that pose a high risk of misuse. The contribution is a fine-tuning method evaluated on existing benchmarks.
    \item[] Guidelines:
    \begin{itemize}
        \item The answer \answerNA{} means that the paper poses no such risks.
        \item Released models that have a high risk for misuse or dual-use should be released with necessary safeguards to allow for controlled use of the model, for example by requiring that users adhere to usage guidelines or restrictions to access the model or implementing safety filters. 
        \item Datasets that have been scraped from the Internet could pose safety risks. The authors should describe how they avoided releasing unsafe images.
        \item We recognize that providing effective safeguards is challenging, and many papers do not require this, but we encourage authors to take this into account and make a best faith effort.
    \end{itemize}

\item {\bf Licenses for existing assets}
    \item[] Question: Are the creators or original owners of assets (e.g., code, data, models), used in the paper, properly credited and are the license and terms of use explicitly mentioned and properly respected?
    \item[] Answer: \answerYes{} % Replace by \answerYes{}, \answerNo{}, or \answerNA{}.
    \item[] Justification: All datasets (GSM8K, MATH, ChemistryQA, FormulaReasoning) and base models (Qwen3-8B-Base, Ministral3-8B-Base, MiMo-7B-Base) are properly cited. External tools math-verify and rapidfuzz are credited with version numbers. 
    \item[] Guidelines:
    \begin{itemize}
        \item The answer \answerNA{} means that the paper does not use existing assets.
        \item The authors should cite the original paper that produced the code package or dataset.
        \item The authors should state which version of the asset is used and, if possible, include a URL.
        \item The name of the license (e.g., CC-BY 4.0) should be included for each asset.
        \item For scraped data from a particular source (e.g., website), the copyright and terms of service of that source should be provided.
        \item If assets are released, the license, copyright information, and terms of use in the package should be provided. For popular datasets, \url{paperswithcode.com/datasets} has curated licenses for some datasets. Their licensing guide can help determine the license of a dataset.
        \item For existing datasets that are re-packaged, both the original license and the license of the derived asset (if it has changed) should be provided.
        \item If this information is not available online, the authors are encouraged to reach out to the asset's creators.
    \end{itemize}

\item {\bf New assets}
    \item[] Question: Are new assets introduced in the paper well documented and is the documentation provided alongside the assets?
    \item[] Answer: \answerNA{} % Replace by \answerYes{}, \answerNo{}, or \answerNA{}.
    \item[] Justification: The paper does not release new assets.
    \item[] Guidelines:
    \begin{itemize}
        \item The answer \answerNA{} means that the paper does not release new assets.
        \item Researchers should communicate the details of the dataset\slash code\slash model as part of their submissions via structured templates. This includes details about training, license, limitations, etc. 
        \item The paper should discuss whether and how consent was obtained from people whose asset is used.
        \item At submission time, remember to anonymize your assets (if applicable). You can either create an anonymized URL or include an anonymized zip file.
    \end{itemize}

\item {\bf Crowdsourcing and research with human subjects}
    \item[] Question: For crowdsourcing experiments and research with human subjects, does the paper include the full text of instructions given to participants and screenshots, if applicable, as well as details about compensation (if any)? 
    \item[] Answer: \answerNA{} % Replace by \answerYes{}, \answerNo{}, or \answerNA{}.
    \item[] Justification: The paper does not involve crowdsourcing nor research with human subjects.
    \item[] Guidelines:
    \begin{itemize}
        \item The answer \answerNA{} means that the paper does not involve crowdsourcing nor research with human subjects.
        \item Including this information in the supplemental material is fine, but if the main contribution of the paper involves human subjects, then as much detail as possible should be included in the main paper. 
        \item According to the NeurIPS Code of Ethics, workers involved in data collection, curation, or other labor should be paid at least the minimum wage in the country of the data collector. 
    \end{itemize}

\item {\bf Institutional review board (IRB) approvals or equivalent for research with human subjects}
    \item[] Question: Does the paper describe potential risks incurred by study participants, whether such risks were disclosed to the subjects, and whether Institutional Review Board (IRB) approvals (or an equivalent approval/review based on the requirements of your country or institution) were obtained?
    \item[] Answer: \answerNA{} % Replace by \answerYes{}, \answerNo{}, or \answerNA{}.
    \item[] Justification: The paper does not involve crowdsourcing nor research with human subjects.
    \item[] Guidelines:
    \begin{itemize}
        \item The answer \answerNA{} means that the paper does not involve crowdsourcing nor research with human subjects.
        \item Depending on the country in which research is conducted, IRB approval (or equivalent) may be required for any human subjects research. If you obtained IRB approval, you should clearly state this in the paper. 
        \item We recognize that the procedures for this may vary significantly between institutions and locations, and we expect authors to adhere to the NeurIPS Code of Ethics and the guidelines for their institution. 
        \item For initial submissions, do not include any information that would break anonymity (if applicable), such as the institution conducting the review.
    \end{itemize}

\item {\bf Declaration of LLM usage}
    \item[] Question: Does the paper describe the usage of LLMs if it is an important, original, or non-standard component of the core methods in this research? Note that if the LLM is used only for writing, editing, or formatting purposes and does \emph{not} impact the core methodology, scientific rigor, or originality of the research, declaration is not required.
    %this research? 
    \item[] Answer: \answerYes{} % Replace by \answerYes{}, \answerNo{}, or \answerNA{}.
    \item[] Justification: LLMs are used as components of the methodology beyond writing assistance: (i) GPT-OSS-120B is used to generate chain-of-thought solutions for the ChemistryQA and FormulaReasoning training splits, with filtering to retain only correct generations (\cref{sec:app:dataset_construction}); (ii) for general (non-mathematical) semantic clustering, an LLM-as-judge is used as the equivalence oracle MATCH in \cref{alg:find-first}, with the system prompt provided verbatim in \cref{sec:app:prompt_llm_as_judge}. The core S-EDL objective itself does not depend on LLM-judge supervision; it operates on the model's own sequence likelihoods.
    \item[] Guidelines:
    \begin{itemize}
        \item The answer \answerNA{} means that the core method development in this research does not involve LLMs as any important, original, or non-standard components.
        \item Please refer to our LLM policy in the NeurIPS handbook for what should or should not be described.
    \end{itemize}

\end{enumerate}